\title[DFL for RMABs]{Efficient Public Health Intervention Planning Using Decomposition-Based Decision-Focused Learning}
\author{Sanket Shah}
\affiliation{
  \institution{Harvard University}
  \country{}
}
\email{sanketshah@g.harvard.edu}
\author{Arun Suggala}
\affiliation{
  \institution{Google Research India}
  \country{}}
\email{arunss@google.com}
\author{Milind Tambe}
\affiliation{
  \institution{Google Research}
  \country{}}
\email{milindtambe@google.com}
\author{Aparna Taneja}
\affiliation{
  \institution{Google Research India}
  \country{}}
\email{aparnataneja@google.com}
\begin{abstract}
The declining participation of beneficiaries over time is a key concern in public health programs. A popular strategy for improving retention is to have health workers `intervene' on beneficiaries at risk of dropping out. However, the availability and time of these health workers are limited resources. As a result, there has been a line of research on optimizing these limited intervention resources using Restless Multi-Armed Bandits (RMABs). The key technical barrier to using this framework in practice lies in the need to estimate the beneficiaries' RMAB parameters from historical data. Recent research has shown that Decision-Focused Learning (DFL), which focuses on maximizing the beneficiaries' adherence rather than predictive accuracy, improves the performance of intervention targeting using RMABs.
Unfortunately, these gains come at a high computational cost because of the need to solve and evaluate the RMAB in each DFL training step.
In this paper, we provide a principled way to exploit the structure of RMABs to speed up intervention planning by cleverly decoupling the planning for different beneficiaries. We use real-world data from an Indian NGO, ARMMAN, to show that our approach is up to two orders of magnitude faster than the state-of-the-art approach while also yielding superior model performance. This would enable the NGO to scale up deployments using DFL to potentially millions of mothers, ultimately advancing progress toward UNSDG 3.1.
\end{abstract}
\keywords{AI for Social Good, Public Health, Predict-Then-Optimize, Decision-Focused Learning, Restless Multi-Armed Bandits, Optimization}
\DeclareMathOperator{\E}{\mathbb{E}}
\newtheorem{thm}{Theorem}
\newtheorem*{thm*}{Theorem}
\newenvironment{pfsketch}{%
    \proof}{\endproof}
\theoremstyle{definition}
\newtheorem{exmp}{Example}[section]
\DeclareMathOperator*{\argmax}{arg\,max}
\definecolor{myOrange}{HTML}{E67E27}
\definecolor{myBlue}{HTML}{6091DB}
\definecolor{myRed}{HTML}{BC1708}
\newcommand{\tgood}{{\color{ForestGreen} T^{\text{good}}}}
\newcommand{\tbad}{{\color{Maroon} T^{\text{bad}}}}
\newcommand{\rbar}{\textcolor{myRed}{\Bar{R}}}
\newcommand{\bmthat}{\textcolor{myOrange}{\bm{\hat{T}}}}
\newcommand{\bmtruet}{\textcolor{myBlue}{\bm{T}}}
\def\that{\bgroup\colorlet{outcolor}{.}\color{myOrange}\hat{T}\futurelet\next\parameterA}
\def\truet{\bgroup\colorlet{outcolor}{.}\color{myBlue}T\futurelet\next\parameterA}
\def\jbar{\bgroup\colorlet{outcolor}{.}\color{myRed}\Bar{J}\futurelet\next\parameterA}
\def\parameterA{\ifx\next_\expandafter\parameterB\else\egroup\fi}
\def\parameterB_#1{_{{\color{outcolor}#1}}\futurelet\next\parameterC}
\def\parameterC{\ifx\next^\expandafter\parameterD\else\egroup\fi}
\def\parameterD^#1{^{\color{outcolor}#1}\egroup}
\gdef\@copyrightpermission{
	\begin{minipage}{0.3\columnwidth}
		\href{https://creativecommons.org/licenses/by/4.0/}{\includegraphics[width=0.90\textwidth]{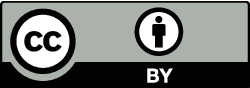}}
	\end{minipage}\hfill
	\begin{minipage}{0.7\columnwidth}
		\href{https://creativecommons.org/licenses/by/4.0/}{This work is licensed under a Creative Commons Attribution International 4.0 License.}
	\end{minipage}
	\vspace{5pt}
}
\begin{document}


\pagestyle{fancy}
\fancyhead{}


\maketitle 


\section{Introduction}
A pervasive challenge faced by public health programs is one of beneficiary retention. To combat the declining engagement of beneficiaries over time, a common strategy has been to use `interventions' (e.g., personalized service calls) to encourage participation and address concerns. This has been employed in a variety of domains such as medication adherence~\cite{mate2020collapsing}, chronic illness management~\cite{killian2023equitable}, treatment prioritization~\cite{ayer2019prioritizing}, and mobile health~\cite{mate2022field}. However, despite their effectiveness, such interventions are expensive and, thus, effectively limited resources.
Consequently, optimizing the selection of beneficiaries for these interventions is crucial.

Towards this end, there has been a recent line of research on using Restless Multi-Armed Bandits (RMABs)~\cite{whittle1988restless,weber1990index,jung2019regret} to optimize intervention resources in these domains. In the RMAB framework, each beneficiary's adherence to the program is modeled as a Markov Decision Process (MDP). The goal, then, is to design policies that choose $K$ out of $N$ beneficiaries for health worker intervention in each timestep such that the overall adherence of all beneficiaries is maximized. However, the key technical barrier to using this framework in practice lies in estimating the beneficiaries' MDP parameters, which are essential for determining these intervention policies. To address this gap, past work relies on \textit{predicting} these parameters using historical data and beneficiary demographics.

An essential component of an effective predictive pipeline in the public health domain involves using `Decision-Focused Learning' (DFL)~\cite{elmachtoub2022smart,wilder2019melding,mandi2022decision}, a way to incorporate intervention planning into the training loop in order to create models that maximize beneficiary adherence directly (cf. predictive accuracy). Both simulated experiments~\cite{wang2023scalable,killian2019learning} and a field study~\cite{verma2023restless} have shown that models trained using DFL outperform those trained using traditional supervised learning pipelines. However, the improved performance of DFL comes at a heavy computational cost---incorporating decision-making into the training pipeline requires solving, evaluating, and differentiating through intervention planning at every training step.

To reduce the computational overhead of using DFL, the state-of-the-art approach~\cite{wang2023scalable} uses the popular Whittle Index heuristic~\cite{weber1990index} to simplify intervention planning. This heuristic decomposes the task of creating a good policy for \textit{all the beneficiaries} to one of deciding whether to act on \textit{individual beneficiaries} in a simplified version of the RMAB problem. However, while this speeds up the \textit{planning} of a good policy, \textit{evaluating} the resulting policy requires repeatedly simulating the outcome of the policy. Yet, such evaluation is a crucial aspect of the DFL training pipeline. Indeed, as we show in \cref{sec:results}, this either results in evaluations with high variance and, as a result, suboptimal learning (for a small number of simulations), or high cost (for a large number of simulations).

Instead, in this paper, we create a decomposition-based DFL approach that extends the ideas from the RMAB planning literature~\cite{weber1990index,hawkins2003langrangian} to both create \textit{and evaluate} policies efficiently, \textit{without the need for any simulations}. Specifically, we begin in~\cref{sec:constrviolation} by showing how using the approach from~\citet{hawkins2003langrangian} to create decomposed policies leads to budget constraint violations in the DFL setting.
Rather, in~\cref{sec:feasibility}, we propose an alternative approach and show how optimizing over a richer class of policies allows us to provably estimate the optimal beneficiary parameters in this setting. Finally, in~\cref{sec:fasteval}, we show how to efficiently (in $O(N)$ time) incorporate this approach into the DFL pipeline by building on techniques from the DFL literature~\cite{amos2017optnet,amos2019limited}. 

To evaluate our approach, we use real-world data from ARMMAN, an Indian NGO, that leverages mobile health (mHealth) technology to promote healthy pregnancies. Specifically, we use secondary data from their mMitra program~\cite{mmitra}, which has successfully delivered vital preventive care information to \textit{2.9 million} women, to build our domain. Notably, DFL~\cite{verma2023restless} has been currently deployed for intervention planning in mMitra and has served around 250,000 beneficiaries so far. Then, in~\cref{sec:results}, we present the results of how our approach does against this existing approach (based on~\citet{wang2023scalable}) on both the real-world domain and a synthetic domain. 

\textbf{We show that our proposed method is up to 500x faster than the currently deployed approach}, while also producing better-performing models (\cref{tab:results}). Practically, this means that models that would take more than a day to train in the past can now be trained in minutes \textit{with no loss in quality}. All in all, we believe that our contribution will allow more scalable learning for RMABs, and hopefully help ARMMAN and other NGOs move us one step closer to UN Sustainable Development Goal 3.1.

\section{Background}\label{sec:background}
\begin{figure}
    \centering
    \includegraphics[width=0.6\linewidth]{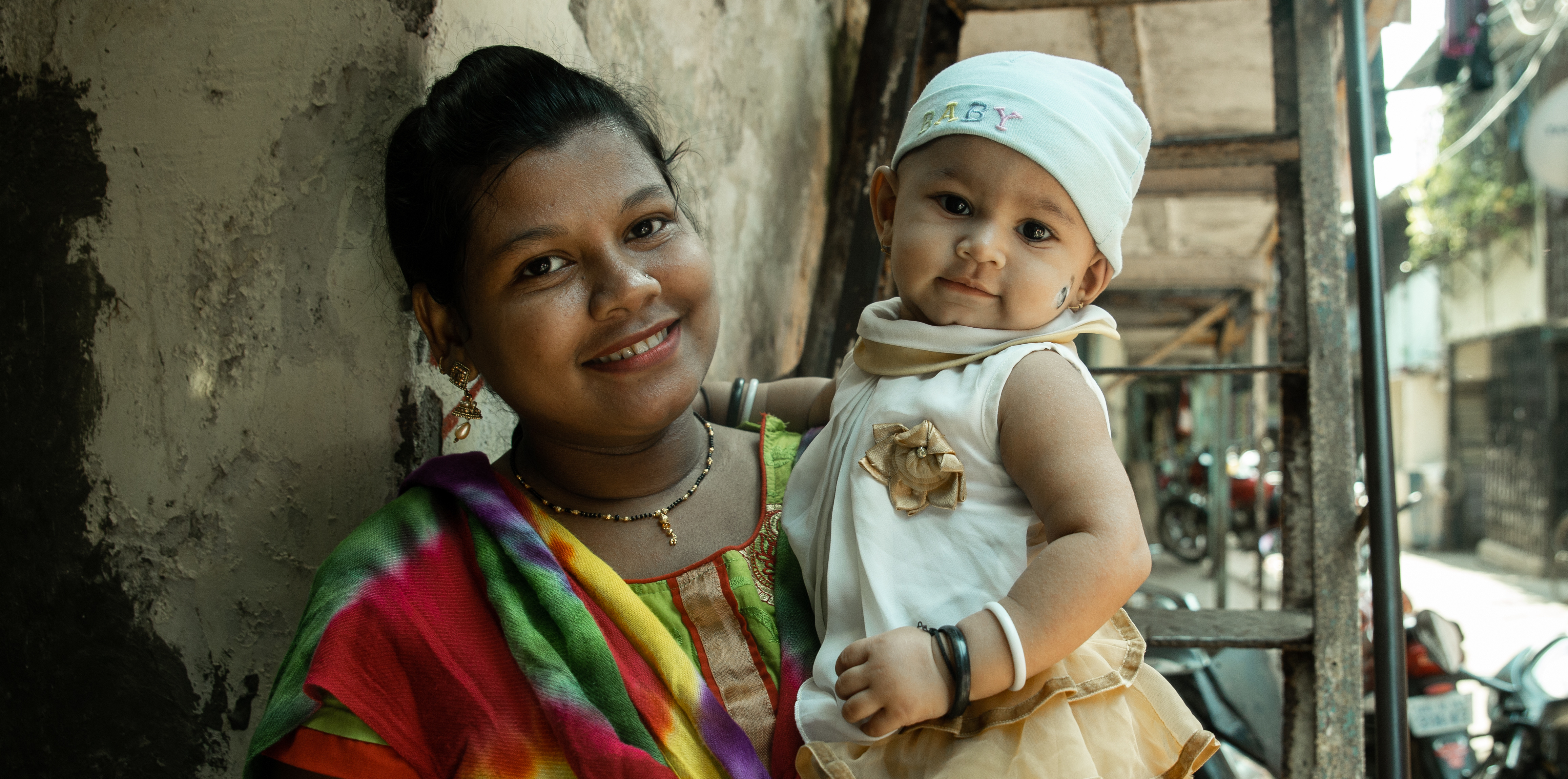}
    \vspace{-0.5em}
    \caption{An mMitra beneficiary \textmd{(courtesy of ARMMAN)}}
    \vspace{-1em}
    \label{fig:benef}
\end{figure}

\subsubsection*{ARMMAN's mMitra Program}
The UN Sustainable Development Goal (SDG) 3.1 aims to reduce the global maternal mortality ratio to below 70 per 100,000 live births by 2030. In line with this goal, ARMMAN uses mHealth technology to combat maternal and neonatal mortality in underprivileged communities across India. Specifically, ARMMAN's mMitra program delivers preventive care information on maternal and infant health through free automated voice calls to beneficiaries. Notably, $\approx$90\% of mothers in the program fall below the World Bank's international poverty line~\cite{verma2023increasing}. Consequently, these weekly calls provide vital and timely information that would otherwise remain inaccessible to these women. However, despite the program's success, engagement wanes over time, with 22\% of beneficiaries dropping out within just three months of enrollment~\cite{verma2023increasing}. To combat this, ARMMAN deploys health workers to conduct live service calls to encourage participation and address concerns. In the context of the mMitra program, our goal is to determine which subset of beneficiaries to select for these service calls on a weekly basis so as to maximize engagement. 

\subsubsection*{Restless Multi-Armed Bandits}
RMABs are an extension of the well-known multi-armed bandit framework to the case where the states of different arms evolve over time \textit{regardless of whether they are pulled or not}. Concretely, each arm $i \in [N]$ of the RMAB is modeled as an MDP that is defined by the tuple $(\mathcal{S}_i, \mathcal{A}_i, T_i, R_i, \gamma)$ where $\mathcal{S}_i$ is the state space, $\mathcal{A}_i$ is the action space, $T_i, R_i \colon \mathcal{S}_i \times \mathcal{A}_i \times \mathcal{S}_i \to \mathbb{R}$ are the transition and reward functions, and $\gamma$ is the discount factor.

Although the results presented in this paper extend to all RMABs, we make the following simplifications for ease of exposition:
\begin{itemize}[leftmargin=1em,nosep]
    \item $\mathcal{S}_i \coloneqq \mathcal{S} = \{0,\ldots,|\mathcal{S}| - 1\}$ that denotes the degree of engagement with the public health program.
    \item $R_i \coloneqq R(s) = \frac{s}{|S| - 1}$, the reward is directly proportional to the degree of engagement with the program.
    \item $\mathcal{A}_i = \mathcal{A} \coloneqq \{0,1\}$ that denotes whether a beneficiary is intervened on (1) or not (0).
\end{itemize}
An important point to note here is that, in large-scale public health interventions, we typically do not have enough data to estimate complex per-arm models, especially for the intervention action. As a result, each per-arm MDP (i.e., $|\mathcal{S}|$) is typically small.

The solution concept for RMABs is a policy $\pi \colon \mathcal{S}^N \to \mathcal{A}^N$ that satisfies a budget constraint $\sum_i \pi_i(s_i) \leq B$ where $B$ is our budget. The optimal policy $\pi^\star$ for transitions $\bm{T}$ can then be written as:
\begin{align}
    \bm{\pi}^\star(\bm{T}) = \argmax_{\bm{\pi}} J_{\bm{T}}(\bm{\pi}) \quad
    s.t. \;\; \sum_{i = 1}^N \pi_i(s_i) \leq B,\; \forall \bm{s} \in \mathcal{S}^N \label{eqn:exact}
\end{align}
where $J_{\bm{T}}(\bm{\pi}) = \mathbb{E}_{\tau \sim \bm{\pi}, \bm{T}}[R(s) + \gamma R(s') + \gamma^2 R (s'') + \ldots]$ is the expected return for trajectories $\tau$ generated using policy $\bm{\pi}$ and transitions $\bm{T}$.

In the RMAB above, \emph{the only thing that is unknown} is the transition matrix $\bm{T}$ that determines beneficiaries' engagement and response to interventions. The challenge, then, is estimating $\bm{T}$. 

\subsubsection*{Decision-Focused Learning (DFL)} While the parameters in bandit problems are sometimes learned online, in public health settings this can be impractical because the programs are short and feedback infrequent. For example, ARMMAN's mMitra program runs for 72 weeks and beneficiaries are only called once a week. 
Moreover, we want to be able to intervene \textit{as early as possible} to prevent beneficiaries from dropping out of the program. As a result, we instead estimate the transition matrices $\bmtruet$ from historical data, \textit{offline}.

This has been modeled as a Predict-then-Optimize (PtO) problem in past work~\cite{wang2023scalable,verma2023restless} and involves three steps:
\begin{enumerate}[leftmargin=1.5em,nosep]
    \item \textbf{Predict Step:} First, we use the demographic features $\bm{x} = [x_1, \ldots, x_N]$ associated with each of the $N$ beneficiaries (arms) to \textit{predict} their transition matrices $\bmthat = [\that_1, \ldots, \that_N] = [M_\theta(x_1), \ldots, M_\theta(x_N)]$ using a predictive model $M_\theta$. 
    \item \textbf{Optimize/Planning Step:} Next, we use these predicted transition matrices $\bmthat$ to compute the optimal policy $\bm{\pi}^\star(\bmthat) = \max_{\bm{\pi}} J_{\bmthat}(\bm{\pi})$, where $J$ is the expected return under policy $\bm{\pi}$.
    \item \textbf{Evaluation Step} Finally, we evaluate the policy $\bm{\pi}^\star(\bmthat)$ on the \textit{true} historical transition probabilities $\bmtruet$, i.e., $J_{\bmtruet}(\bm{\pi}^\star(\bmthat))$. We call this value the `Decision Quality' (DQ) of the prediction $\bmthat$.
\end{enumerate}
\vspace{0.3em}
The overall goal for DFL, then, is to learn a set of parameters $\theta^\star$ for the predictive model $M_\theta$ such that the final decision quality is maximized. With a slight abuse of notation where $M_\theta(\bm{x}) = [M_\theta(x_1), \ldots, M_\theta(x_N)]$, this can be written as:
\begin{align}
    \theta^\star = \argmax_{\theta}\; \E_{\bm{x}, \bmtruet \sim \mathcal{D}} \underbrace{\left [ J_{\bmtruet}(\bm{\pi}^\star(M_\theta(\bm{x}))) \right ]}_{\ell_{\text{DFL}}(M_\theta(\bm{x}), \bmtruet)} \label{eqn:dfl}
\end{align}
This is different from a typical supervised learning problem in which the goal is to minimize a ``standard'' loss function, e.g., MSE:
\begin{align*}
    \theta^\star = \argmax_{\theta}\; \E_{\bm{x}, \bmtruet \sim \mathcal{D}} \underbrace{\left [ ||M_\theta(\bm{x}) - \bmtruet||_2^2 \right ]}_{\ell_\text{MSE}(M_\theta(\bm{x}), \bmtruet)}
\end{align*}

\section{Related Work}\label{sec:related}
\subsubsection*{DFL for RMABs} The closest related branch of the literature on solving problems similar to~\cref{eqn:dfl} is that of decision-focused model-based
reinforcement learning ~\cite{futoma2020popcorn,wang2021learning,farahmand2017value,nikishin2022control}. There, the goal is to estimate MDP parameters that lead to good downstream policies. However, while these approaches can \textit{technically} be applied to the RMAB domain, the state space of RMABs is combinatorial in the number of arms $N$ and known to be PSPACE-Hard~\cite{papadimitriou1994complexity} to solve.

To make solving \cref{eqn:dfl} computationally tractable for RMABs, \citet{wang2023scalable} propose an efficient approximate approach to planning that uses the popular Whittle Index-based policy~\cite{weber1990index}:
\begin{align}
    \bm{\pi}^\star \approx \bm{\pi}^\text{WI} (s) = \begin{cases}
        1 & \text{WI}^i(s^i) \in \text{Top-}$B$(\text{WI}(\bm{s})) \\
        0 & \text{otherwise}
    \end{cases} \label{eqn:wi} 
\end{align}
However, while $\bm{\pi}^{\text{WI}}$ only depends on the Whittle Indexes (that are calculated independently per arm), the Top-$B$ policy still acts on the combinatorial state space $\bm{s} = [s_1, \ldots, s_N]$. As a result, evaluating $\bm{\pi}^{\text{WI}}$ requires using expensive Monte Carlo simulations. Instead, in this paper, we propose a novel and significantly cheaper way to approximate both policy creation \textit{and evaluation}.

\subsubsection*{Decomposed RMAB Evaluation} The solution we present in \cref{sec:decompeval} builds on foundational work in the planning literature~\cite{weber1990index,hawkins2003langrangian}. The Whittle Index heuristic itself is based on a relaxation of~\cref{eqn:exact} that decomposes the combinatorial problem into $N$ per-arm problems. However, in \cref{sec:constrviolation}, we describe why existing methods lead to constraint violations in our DFL setting. Then, in \cref{sec:feasibility}, we show how to modify these ideas so that they \textit{are} applicable and derive a novel solution method for the resulting formulation.

\subsubsection*{Multi-Model MDPs} Our solution in \cref{sec:feasibility} requires coming up with a policy that maximizes the return with respect to one MDP (A) while having a bounded return with respect to a different MDP (B). This is a generalization of the popular ``Constrained MDPs'' framework~\cite{altman1999constrained} to the case where the MDPs A and B have different transition matrices in addition to different reward functions. The most directly related work to this is that of ``Concurrent MDPs'' ~\cite{buchholz2019computation} or ``Multi-model MDPs''~\cite{steimle2021multi}, which show that solving for such policies is NP-Hard and provide Mixed Integer Programming-based solutions. Instead, in this paper, we use the fact that per-arm MDPs for public health RMABs are typically small to create an efficient alternate approach that is also easily differentiable.

\section{Decomposed RMAB Evaluation} \label{sec:decompeval}
Our high-level idea for speeding up DFL involves coming up with a good policy $\bm{\pi}^{\text{DEC}}$ that has the following properties:
\begin{itemize}[leftmargin=1em,nosep]
    \item \textbf{Decomposable:} If we can come up with a good policy $\bm{\pi}^{\text{DEC}} =[\pi^{\text{DEC}}_1(s_1), \ldots, \pi^{\text{DEC}}_N(s_N)]$ that acts on different beneficiaries independently, we can also \textit{evaluate} it in a decomposed manner:
    \begin{align*}
    J_{\bmtruet}(\bm{\pi}^{\text{DEC}}) = \sum_i J_{\truet_i}(\pi^{\text{DEC}}_i)
    \end{align*}
    Specifically, we can evaluate the per-arm returns by solving the Bellman Equations (\cref{alg:returns} in \cref{sec:returns}) \textit{without the need for simulations}, because the number of states in each per-arm MDP is typically small in RMAB formulations for public health.
    \item \textbf{Differentiable:} If the algorithm for estimating $\bm{\pi}^{\text{DEC}}$ is differentiable, we can simply substitute $\bm{\pi}^\star$ with $\bm{\pi}^{\text{DEC}}$ in \cref{eqn:dfl} to get the following decomposed estimator for the predictive model:
\begin{align}
    \theta^\star = \argmax_{\theta}\; \E_{\bm{x}, \bmtruet \sim \mathcal{D}} \left [ J_{\bmtruet}(\bm{\pi}^{\text{DEC}}(M_\theta(\bm{x}))) \right ] \label{eqn:naivedecomp}
\end{align}
\end{itemize}
Importantly, the Whittle Index policy $\bm{\pi}^{\text{WI}}$ in \cref{eqn:wi} that is used by~\citet{wang2023scalable} is \textit{not} decomposable because we need to know the states $\bm{s}$ of all beneficiaries to determine $\text{Top-}$B$(\text{WI}(\bm{s}))$ (\cref{eqn:wi}). 

In the remainder of this section, we begin by showing why past approaches for calculating $\bm{\pi}^{\text{DEC}}$ lead to bad estimators of $\theta^\star$  and hence bad estimates of $\bmthat$ in 
\cref{sec:constrviolation}. Then, in \cref{sec:feasibility}, we propose an alternate problem formulation that leads to provably good estimation. Finally, we show how to efficiently solve for $\bm{\pi}^{\text{DEC}}$ in this alternative formulation by extending techniques from the DFL literature in \cref{sec:fasteval}.

\subsection{Limitations of Past Work in Estimating $\theta^\star$}\label{sec:constrviolation}
To create a policy that does not depend on the joint state $\bm{s} = [s_1, \ldots, s_N]$ of all the beneficiaries but rather on each beneficiary individually, past work~\cite{weber1990index,hawkins2003langrangian} relaxes the per-state budget constraint in \cref{eqn:exact} to a constraint over the amount of budget used \textit{in expectation}. This results in the following relaxed problem:
\begin{align}
    \bm{\pi}^{\that\text{-DEC}} (\bmthat) = \argmax_{\bm{\pi}} \; J_{\bmthat}(\bm{\pi}) \quad s.t. \;\; \jbar_{\bmthat}(\bm{\pi}) \leq \frac{B}{1-\gamma} \label{eqn:relaxed}
\end{align}
where, $\jbar$ is the expected return of an MDP with transitions $\bmthat$, but a different reward $\rbar(\bm{s}, \bm{a}) = \sum_{i \in [N]} a_i$. $\jbar$ keeps track of how many interventions the policy $\bm{\pi}$ performs, and the constraint makes sure that this value is bounded by the (infinite-horizon discounted) budget $\frac{B}{1 - \gamma}$. Then, \citet{hawkins2003langrangian} shows an efficient way to solve the dual reformulation of this problem to get a decomposable policy.

However, while all the planning literature only focuses on calculating a good policy for a \textit{single} fixed transition matrix $\bm{T}$, there are actually two sets of transition matrices in our DFL setting---the predicted transition matrices $\bmthat$, and the true transition matrices $\bmtruet$. As a result, if we use \cref{eqn:relaxed} to plan for the optimal policy $\bm{\pi}^\star(\bmthat) \approx \bm{\pi}^{\that\text{-DEC}}(\bmthat)$ in the DFL pipeline, we would only satisfy the budget constraint with respect to the \textit{predicted} transitions, not the true transitions. As a result, if $\bmthat \neq \bmtruet$ it could lead to (possibly large) constraint violations:

\begin{exmp}\label{ex:violation}
Below, we describe what may go wrong in the simplest possible parameter estimation problem---predicting the parameters of an RMAB with only one arm, i.e., a single 2-state MDP's transition matrix $\that$. Consider the prediction:
\begin{gather}
    \that^{0} = 
    \begin{bmatrix}
        1 & 0 \\
        0 & 1
    \end{bmatrix},\;
    \that^{1} = 
    \begin{bmatrix}
        0 & 1 \\
        0 & 1
    \end{bmatrix},\;
    \text{WI}_{\that} =  \begin{bmatrix}
        \frac{\gamma}{1 - \gamma} \\
        0
    \end{bmatrix}
    \label{eqn:bestpred}
\end{gather}
where, an entry of the matrix $\that^{a}_{s,s'}$ represents the probability $P(s'|s, a)$ of transitioning to state $s'$ when in state $s$ and taking action $a$, and $\text{WI}_{\that}$ contains the whittle indices of each state.

This MDP has the highest possible Whittle Index (action effect) for state 0---if you don't act, you'll always stay in state 0 and accumulate no reward, but if you act on the arm \textit{just once}, you will transition to state 1 where you can passively collect a reward of 1 in every timestep without ever needing to act again. Because you only need to act once to get the benefits, the optimal policy uses only 1 unit of budget in comparison to the $\frac{B}{1 - \gamma}$ units that are available (the $1 - \gamma$ factor comes from the infinite-horizon discounting). As a result, as long as our budget $B \geq 1 - \gamma$, the optimal policy $\pi^{\that\text{-DEC}}(\that)$ according to~\cref{eqn:relaxed} will be to act in state 0.

However, in the DFL context, this policy must be evaluated not on the predicted transition matrix $\that$, but on the \textit{true} transition matrix that could be completely different. For example, consider the following true transition matrix $\bmtruet$:
\begin{gather*}
    \truet^{0} = 
    \begin{bmatrix}
        1 & 0 \\
        1 & 0
    \end{bmatrix},\;
    \truet^{1} = 
    \begin{bmatrix}
        1 & 0 \\
        1 & 0
    \end{bmatrix},\;
    \text{WI}_{\truet} =
    \begin{bmatrix}
        0\\
        0
    \end{bmatrix}
\end{gather*}
For this transition matrix, we will always stay in state 0 (or move there, if we start in state 1). Applying the policy $\pi^{\that\text{-DEC}} (\that)$ from above, that chooses to act in state 0, we will expend a discounted budget of $\approx \frac{1}{1 - \gamma}$ because we will act in every timestep. As a result, if our true budget is only $1 - \gamma$, we will overshoot our budget by a factor of $\frac{\text{used budget}}{\text{true budget}} = \frac{\frac{1}{1 - \gamma}}{1 - \gamma} = \frac{1}{(1 - \gamma)^2}$, which is \textbf{100x} for a standard discount factor of $\gamma = 0.9$.
\end{exmp}

The example above shows that there exists a combination of predictions $\bmthat$ and true matrices $\bmtruet$ for which~\cref{eqn:relaxed} leads to budget violations. However, the goal is not to solve for good policies, but rather to estimate parameters by using~\cref{eqn:relaxed} in the DFL pipeline. So, do these budget violations lead to bad parameter estimation? In the theorem below, we show that using~\cref{eqn:relaxed} to perform parameter estimation leads to spurious minima in the DFL setting.

\begin{thm}\label{thm:badminima}
  Predicting $\bmthat = \bmtruet$ is \textbf{not} always a maximizer of the Predict-Then-Optimize problem below:
    \begin{align*}
        \textcolor{myOrange}{\bmthat^\star} = \argmax_{\bmthat}\;  J_{\bmtruet}(\bm{\pi}^{\that\text{-DEC}}(\bmthat))
    \end{align*}
\end{thm}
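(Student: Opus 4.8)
The plan is to prove the statement by constructing an explicit single-arm counterexample in which some prediction $\bmthat \neq \bmtruet$ attains a strictly higher decision quality than the honest prediction $\bmthat = \bmtruet$; a single such instance suffices to refute ``always a maximizer.'' The guiding intuition, which I would carry over directly from \cref{ex:violation}, is that the relaxed policy $\bm{\pi}^{\that\text{-DEC}}(\bmthat)$ is forced to respect the budget only with respect to the \emph{predicted} transitions. A deliberately wrong prediction can therefore induce a policy that overshoots the \emph{true} budget, and whenever extra interventions buy extra reward under $\bmtruet$, this smuggled-in budget usage shows up as a higher value of $J_{\bmtruet}$ — which the objective rewards, since it contains no penalty for violating the true budget.

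Concretely, I would take $N=1$ with a two-state true MDP in which inaction collapses the beneficiary to the disengaged state and intervention guarantees the engaged state, i.e. $\truet^{0} = \left[\begin{smallmatrix}1&0\\1&0\end{smallmatrix}\right]$ and $\truet^{1} = \left[\begin{smallmatrix}0&1\\0&1\end{smallmatrix}\right]$, starting in state $0$, and set $B = 1-\gamma$ so the discounted budget in \cref{eqn:relaxed} equals $1$. The first step I would establish is the key identity that, under these dynamics, reward is obtained at time $t$ exactly when an intervention occurred at time $t-1$, whence $J_{\bmtruet}(\bm{\pi}) = \gamma\,\jbar_{\bmtruet}(\bm{\pi})$ for \emph{every} policy (here $\jbar_{\bmtruet}(\bm{\pi})$ is just the discounted number of interventions). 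Because the honest prediction enforces its constraint on precisely these true transitions, we have $\jbar_{\bmtruet}(\bm{\pi}^{\that\text{-DEC}}(\bmtruet)) \le 1$, and the identity immediately upper-bounds the honest decision quality by $J_{\bmtruet}(\bm{\pi}^{\that\text{-DEC}}(\bmtruet)) \le \gamma$.

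Next I would take the dishonest prediction to be precisely the $\that$ of \cref{ex:violation}, namely $\that^{0} = \left[\begin{smallmatrix}1&0\\0&1\end{smallmatrix}\right]$ and $\that^{1} = \left[\begin{smallmatrix}0&1\\0&1\end{smallmatrix}\right]$, which falsely claims that one intervention in state $0$ secures permanent engagement. \cref{ex:violation} already establishes that, for any $B \ge 1-\gamma$, the relaxed optimum $\bm{\pi}^{\that\text{-DEC}}(\that)$ acts in state $0$ and not in state $1$, since under the predicted dynamics this costs only a single unit of budget. I would then evaluate this fixed policy on the true transitions: from state $0$ it oscillates $0 \to 1 \to 0 \to \cdots$, intervening at every even timestep, so $\jbar_{\bmtruet} = \tfrac{1}{1-\gamma^2}$. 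Combined with the identity from the first step, $J_{\bmtruet}(\bm{\pi}^{\that\text{-DEC}}(\that)) = \tfrac{\gamma}{1-\gamma^2} > \gamma$, which strictly exceeds the honest upper bound and shows $\bmthat = \bmtruet$ is not a maximizer.

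Since every calculation here is elementary, the main obstacle is conceptual rather than computational: I must handle the \emph{honest} side correctly. The subtle point is that feasibility of $\bm{\pi}^{\that\text{-DEC}}(\bmthat)$ is always measured against $\bmthat$, so when $\bmthat = \bmtruet$ the constraint MDP and the evaluation MDP coincide — exactly what forces $\jbar_{\bmtruet} \le 1$ and yields the $\gamma$ bound. I would deliberately phrase the honest argument as a bound rather than an exact solve, because at, say, $\gamma = 0.9$ no nontrivial deterministic policy is even feasible and the honest optimum is attained only by a randomized policy; the identity $J_{\bmtruet} = \gamma\,\jbar_{\bmtruet}$ lets me sidestep solving \cref{eqn:relaxed} on the honest side entirely. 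The remaining check is that state $0$ is genuinely the relaxed optimum under $\that$ for this budget, which \cref{ex:violation} supplies directly.
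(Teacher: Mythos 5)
Your proof is correct, and it rests on the same high-level strategy as the paper's: a counterexample instantiating the ``buy more budget by predicting transitions that underestimate its cost'' mechanism named in the proof sketch. The construction itself, however, is genuinely different and more minimal. The paper uses a two-arm instance ($\tgood$, $\tbad$) with budget $B=\tfrac{1}{1+\gamma}$, where the honest prediction exhausts the entire relaxed budget oscillating on the good arm and must neglect the bad arm, whereas predicting the \cref{ex:violation} matrix for both arms falsely prices each action at one unit and therefore acts on both; verifying this requires computing and comparing the $\frac{\Delta J}{\jbar}$ ratios of the two arms. Your single-arm version replaces all of that bookkeeping with the identity $J_{\bmtruet}(\bm{\pi})=\gamma\,\jbar_{\bmtruet}(\bm{\pi})$, which caps every honestly-feasible policy at $\gamma$ in one line and values the overshooting policy at $\frac{\gamma}{1-\gamma^{2}}>\gamma$ by a direct trajectory computation --- no ratio argument, and no need to solve the honest relaxed problem (whose optimum here is attained only by a randomized policy, a subtlety you rightly sidestep with an upper bound). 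You also correctly modified $\truet^{1}$ relative to \cref{ex:violation} so that the smuggled-in interventions actually buy reward; with the example's own $\bmtruet$, where acting is useless, the counterexample would collapse to $0>0$. What the paper's two-arm version buys in exchange for its extra work is a picture closer to the real RMAB failure mode: budget stolen via one arm's optimistic prediction is budget denied to another arm. One point worth making explicit in your write-up: the dishonest argmax $\bm{\pi}^{\that\text{-DEC}}(\that)$ is unique only over stationary policies, which is the class fixed by \cref{eqn:exact}; over history-dependent policies the tie ``act once at $t=0$, then never again'' is also optimal under $\that$ yet evaluates to exactly $\gamma$ under $\bmtruet$, which would destroy the strict inequality.
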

\begin{pfsketch}
    The intuition for this claim is that, along the lines of \cref{ex:violation}, one can ``buy" more budget by predicting a transition matrix $\bm{\Tilde{T}}$ that uses less budget than the true transitions $\bmtruet$. To prove this, we provide a proof by counterexample where:
    \begin{align*}
        J_{\bmtruet}(\bm{\pi}^{{\that\text{-DEC}}}(\bm{\Tilde{T}})) > J_{\bmtruet}(\bm{\pi}^{{\that\text{-DEC}}}(\bmtruet)) & \qedhere
    \end{align*}
\end{pfsketch}
Moreover, our choice of $\bmtruet$ in the counter-example is not special, making bad parameter estimation the norm, and not an exception.

\subsection{Our Approach: DEC-DFL}\label{sec:feasibility}
In this section, we begin by proposing~\cref{eqn:relaxed-corrected}, an alternative to to~\cref{eqn:relaxed}, that leads to provably good parameter estimation (\cref{thm:goodminima}). Then, to solve~\cref{eqn:relaxed-corrected}, we propose a series of approximations that exploit the properties of~\cref{thm:goodminima} and the fact that per-arm MDPs in public health-based RMABs are small, to get~\cref{alg:decdfl}.

Then, we begin this section by first defining an alternative to~\cref{eqn:relaxed} that ensures budget feasibility:
\begin{align}
    \bm{\pi}^{\truet\text{-DEC}} (\bmthat) = \argmax_{\bm{\pi}} \; J_{\bmthat}(\bm{\pi}) \quad s.t. \;\; \jbar_{\bmtruet}(\bm{\pi}) \leq \frac{B}{1-\gamma} \label{eqn:relaxed-corrected}
\end{align}
where the only difference is that the budget constraint must now be satisfied with respect to \textit{true transition matrix} $\,\bmtruet$. Then, we can show that $\bm{\pi}^{\truet\text{-DEC}}$ leads to good DFL parameter estimation:

\begin{thm}\label{thm:goodminima}
    Predicting $\bmthat = \bmtruet$ is always a maximizer of the Predict-Then-Optimize problem below:
    \begin{align}
        \textcolor{myOrange}{\bmthat^\star} = \argmax_{\bmthat}\;  J_{\bmtruet}(\bm{\pi}^{\truet\text{-DEC}}(\bmthat)) \label{eqn:decompdfl}
    \end{align}
\end{thm}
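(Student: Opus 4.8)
The plan is to exploit the single structural feature that distinguishes the corrected program~\cref{eqn:relaxed-corrected} from~\cref{eqn:relaxed}: the budget constraint is now imposed with respect to the \emph{true} transitions $\bmtruet$, so the set of admissible policies does not depend on the prediction $\bmthat$ at all. Once this is isolated, the theorem reduces to a one-line optimality argument, so I would spend essentially all the care on making that observation precise rather than on any calculation.

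First I would define the common feasible set
\begin{align*}
    \mathcal{F} \eqdef \left\{ \bm{\pi} : \jbar_{\bmtruet}(\bm{\pi}) \leq \tfrac{B}{1-\gamma} \right\},
\end{align*}
which depends only on $\bmtruet$. By the definition in~\cref{eqn:relaxed-corrected}, for \emph{every} prediction $\bmthat$ the induced policy $\bm{\pi}^{\truet\text{-DEC}}(\bmthat) = \argmax_{\bm{\pi} \in \mathcal{F}} J_{\bmthat}(\bm{\pi})$ is a member of this same set $\mathcal{F}$. Next I would instantiate the prediction at the truth: $\bm{\pi}^{\truet\text{-DEC}}(\bmtruet) = \argmax_{\bm{\pi} \in \mathcal{F}} J_{\bmtruet}(\bm{\pi})$ is, by definition, the maximizer of the true return over all of $\mathcal{F}$. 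The argument then closes immediately, since $\bm{\pi}^{\truet\text{-DEC}}(\bmthat) \in \mathcal{F}$ is just one feasible competitor:
\begin{align*}
    J_{\bmtruet}(\bm{\pi}^{\truet\text{-DEC}}(\bmtruet)) \;\geq\; J_{\bmtruet}(\bm{\pi}^{\truet\text{-DEC}}(\bmthat)) \qquad \text{for all } \bmthat,
\end{align*}
which is exactly the statement that $\bmthat = \bmtruet$ maximizes~\cref{eqn:decompdfl}.

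I do not expect the inequality itself to be the obstacle; the delicate point is justifying that $\mathcal{F}$ is genuinely invariant to $\bmthat$, and explaining why the analogous claim is \emph{false} for~\cref{eqn:relaxed}. There, the constraint $\jbar_{\bmthat}(\bm{\pi}) \leq \frac{B}{1-\gamma}$ lets the feasible region move with the prediction, so a deliberately wrong $\bmthat$ can enlarge the admissible set and ``buy'' extra effective budget---precisely the failure mode behind~\cref{ex:violation} and~\cref{thm:badminima}. I would therefore frame the whole proof around the contrast between an objective-dependent prediction (harmless, since it only reselects \emph{within} a fixed $\mathcal{F}$) and a constraint-dependent prediction (harmful). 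Finally, I would remark that the theorem only claims $\bmtruet$ is \emph{a} maximizer, so any non-uniqueness of the $\argmax$---e.g.\ another $\bmthat$ inducing the same $\mathcal{F}$-optimal policy---is harmless and requires no extra work.
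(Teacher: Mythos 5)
Your proof is correct and follows essentially the same route as the paper's: both observe that the feasible set is fixed by $\bmtruet$ alone, so every $\bm{\pi}^{\truet\text{-DEC}}(\bmthat)$ is a feasible competitor dominated by $\bm{\pi}^{\truet\text{-DEC}}(\bmtruet)$. Your explicit framing via the prediction-invariant set $\mathcal{F}$, and the contrast with the constraint-dependent feasible region of \cref{eqn:relaxed}, is a nice clarification but not a different argument.
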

\begin{proof}
    We begin by noting that the input to $J_{\bmtruet}$ in \cref{eqn:decompdfl} is the output of \cref{eqn:relaxed-corrected}. As a result, any such input policy must satisfy the constraint that $\jbar_{\bmtruet}(\bm{\pi}) \leq \frac{B}{1-\gamma}$. Then, the optimal solution to \cref{eqn:decompdfl} across all possible policies is $\bm{\pi}^\star = \argmax_{\jbar_{\bmtruet}(\bm{\pi}) \leq \frac{B}{1-\gamma}} J_{\bmtruet}(\bm{\pi})$, which is (by definition) exactly the solution to $\bm{\pi}^{\truet\text{-DEC}}(\bmtruet)$! Therefore, any prediction $\bmthat$ can only ever do as well as $\bm{\pi}^{\truet\text{-DEC}}(\bmtruet)$:
    \begin{align*}
        J_{\bmtruet}(\bm{\pi}^{\truet\text{-DEC}}(\bmtruet))
        \geq J_{\bmtruet}(\bm{\pi}^{\truet\text{-DEC}}(\bmthat)), \quad \forall \bmthat & 
        \qedhere
    \end{align*}
\end{proof}

Solving the problem in \cref{eqn:relaxed-corrected}, however, is significantly more challenging than solving \cref{eqn:relaxed} because, unlike in \citet{hawkins2003langrangian}, the dual reformulation of \cref{eqn:relaxed-corrected} cannot be efficiently solved (see `Multi-Model MDPs' in~\cref{sec:related}).
Instead, in this paper, we use a different set of approximations that rely on two observations:
\begin{itemize}[leftmargin=1em,nosep]
    \item \textbf{\cref{thm:goodminima} holds regardless of the domain of $\bm{\pi}$:} Our argument only relies on the fact that $\bm{\pi}^{\truet\text{-DEC}}(\bmtruet)$ maximizes $\argmax_{\bm{\pi}} J_{\bmtruet}(\bm{\pi})$. However, this is true regardless of whether $\bm{\pi}$ is a deterministic policy, a randomized policy, or even some mixture of these. As a result, we will have good parameter estimation regardless of the class of policies that we optimize over.
    \item \textbf{We do not have to solve \cref{eqn:relaxed-corrected} exactly}: Given that our only use of $\bm{\pi}^{\text{DEC}}$ is to estimate good parameters $\bm{\theta}^\star$, we do not have to restrict ourselves to using practically implementable policies. Instead, we can choose a different policy space that is easier to optimize over.
    This is similar to minimizing the MSE as an easy-to-optimize surrogate for the ``0-1'' loss.
\end{itemize}
So, while in practice we may want to optimize over the class of deterministic policies that contains, for e.g., the Whittle Index policy $\bm{\pi}^{\text{WI}}$, we can instead optimize over a richer class---a mixture of deterministic policies $Z$ such that $\bm{\pi} \sim Z$. Then, we use two facts to simplify our optimization. First, we use the following theorem to show that optimizing over this space is equivalent to optimizing over the space of \textit{decomposable} deterministic policies $\bm{\pi} \sim Z^{\text{DEC}}$.

\begin{thm}\label{thm:decomposable}
    Let $\Omega$ be the set of all distributions over deterministic policies, and $\Omega^{\mathrm{DEC}}$ be the set of all distributions over deterministic \emph{decomposable} policies. Consider the following optimization problems:
    \begin{align*}
        \max_{Z \in \Omega} \quad  \E_{\bm{\pi} \sim Z} [J_{\bmtruet}(\bm{\pi})], \quad \text{s.t.}  \;\;  \E_{\bm{\pi} \sim Z }[\jbar_{\bmtruet}(\bm{\pi})] \leq \frac{B}{1-\gamma} \\
         \max_{\mathclap{Z \in \Omega^{\mathrm{DEC}}}} \quad \E_{\bm{\pi} \sim Z} [J_{\bmtruet}(\bm{\pi})],\quad \text{s.t.}  \;\; \E_{\bm{\pi} \sim Z }[\jbar_{\bmtruet}(\bm{\pi})] \leq \frac{B}{1-\gamma}
    \end{align*}
    Then, any maximizer of the latter is also a maximizer of the former.
\end{thm}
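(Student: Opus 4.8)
The plan is to lift both problems into the space of per-arm discounted state–action occupancy measures, where they become the \emph{same} linear program, and then read off the claim.

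First I would note that both functionals are additively separable across arms and, in expectation under any $Z$, depend on arm $i$ only through its marginal occupancy $\mu_i(s,a) \eqdef \E_{\bm{\pi}\sim Z}\E_{\tau\sim\bm{\pi},\bmtruet}[\sum_{t\ge 0}\gamma^t\mathbf{1}(s_i^t=s,a_i^t=a)]$. Since the per-arm reward is $R(s)$ and $\rbar(\bm{s},\bm{a})=\sum_i a_i$, we have $\E_{\bm{\pi}\sim Z}[J_{\bmtruet}(\bm{\pi})]=\sum_i\sum_{s,a}\mu_i(s,a)R(s)$ and $\E_{\bm{\pi}\sim Z}[\jbar_{\bmtruet}(\bm{\pi})]=\sum_i\sum_{s,a}\mu_i(s,a)\,a$, so objective and constraint are linear in the tuple $(\mu_1,\dots,\mu_N)$ for \emph{both} problems (with arm $i$'s initial distribution $\rho_i$ inherited from the factorized joint start). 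It therefore suffices to show that $\Omega$ and $\Omega^{\mathrm{DEC}}$ induce the same set of achievable tuples.

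Next I would characterize that set arm by arm. For a single arm, the occupancy measures realizable by arbitrary policies on arm $i$'s MDP form the Bellman-flow polytope $\mathcal{M}_i=\{\mu\ge 0:\sum_a\mu(s',a)=\rho_i(s')+\gamma\sum_{s,a}\mu(s,a)\truet_i(s'|s,a)\ \forall s'\}$, whose vertices are the occupancies of deterministic stationary single-arm policies. The key inclusion is that marginalizing any (coupled) joint policy onto arm $i$ lands in $\mathcal{M}_i$: because arm $i$'s next state is drawn from $\truet_i$ given only its own current state and action, the one-step identity $P(s_i^{t+1}=s')=\sum_{s,a}P(s_i^t=s,a_i^t=a)\truet_i(s'|s,a)$ holds no matter how $a_i^t$ is correlated with the other arms, and summing it over $t$ with discount $\gamma^t$ yields exactly the flow constraints. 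Hence every $Z\in\Omega$ produces a tuple in $\mathcal{M}_1\times\cdots\times\mathcal{M}_N$. For the reverse direction I would show $\Omega^{\mathrm{DEC}}$ already attains the whole product: given any target with $\mu_i\in\mathcal{M}_i$, decompose each $\mu_i$ into its vertex (deterministic single-arm) occupancies and sample the arms' deterministic policies \emph{independently} with those weights; the resulting joint policy is decomposable and deterministic, and since arms evolve independently under a decomposable policy, its arm-$i$ marginal occupancy is exactly $\mu_i$. Combining this with $\Omega^{\mathrm{DEC}}\subseteq\Omega$ and the inclusion above forces both classes to realize precisely $\mathcal{M}_1\times\cdots\times\mathcal{M}_N$, so the two programs coincide and share an optimal value. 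Any maximizer $Z^\star\in\Omega^{\mathrm{DEC}}$ of the latter thus attains this common optimum and, lying in $\Omega$ under the identical constraint, is feasible and optimal for the former.

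The hard part will be the marginalization step: it is tempting but incorrect to argue that each arm is Markov under a coupled joint policy, so the argument must lean only on the fact that the \emph{occupancy measure} (rather than the process itself) satisfies the flow constraints — which it does because the per-arm dynamics factor through $\truet_i$ even when actions are chosen jointly.
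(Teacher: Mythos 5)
Your proof is correct, and it takes a genuinely different route from the paper's. The paper works in the dual: it forms the Lagrangian of the first problem, invokes a minimax theorem to swap $\min_{\lambda}$ and $\max_{Z}$, observes that for fixed $\lambda$ the inner unconstrained maximization separates across arms (so an optimal $Z$ may be taken in $\Omega^{\mathrm{DEC}}$), and then swaps back. You instead work in the primal, passing to per-arm discounted occupancy measures: the objective and the single budget constraint are linear in the tuple $(\mu_1,\dots,\mu_N)$, and your two inclusions --- (i) the arm-$i$ marginal occupancy of any coupled joint policy satisfies the Bellman-flow constraints because the transition kernel factors through $(s_i,a_i)$ no matter how actions are correlated across arms, and (ii) every point of $\mathcal{M}_1\times\cdots\times\mathcal{M}_N$ is realized by independently sampling per-arm deterministic policies from the vertex decompositions of the $\mu_i$ --- show that both policy classes induce exactly the same feasible set of occupancy tuples, so the two programs are literally the same linear program. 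Your version buys a slightly stronger and more self-contained conclusion: it identifies the common optimal value \emph{and} the common set of optimal occupancy tuples without appealing to minimax interchange or strong duality, whose preconditions the paper asserts rather than verifies, and it makes the last step (a maximizer of the decomposable problem is feasible for, and hence optimal in, the larger problem) immediate. What the paper's route buys is brevity and a direct link to the Lagrangian machinery that Section 4.3 then exploits computationally. Your closing caveat is exactly the right one: the argument must be made at the level of the occupancy measure, which satisfies the flow equations under arbitrary coupling, rather than by claiming each arm is marginally Markov under a coupled policy, and you handle that correctly.
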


Second, we use the fact that each per-arm MDP is typically small in public health-based RMAB formulations (just two states in our real-world domain). Combining these two, we can enumerate all $2^{|\mathcal{S}|}$ deterministic per-arm policies, and then solve for the optimal mixture over them using the following optimization problem:
\begin{align}
    Z^\star(J_{\bmthat}, \jbar_{\bmtruet}) = \argmax_{0 \leq Z_{ij} \leq 1} \quad & \sum_{i=1}^N \sum_{j=1}^{2^{|\mathcal{S}|}} Z_{ij} J_{\that_i}(\pi^j) + \Phi(Z) \nonumber \\
    s.t. \quad & \sum_{j=1}^{2^{|\mathcal{S}|}} Z_{ij} = 1 ,\quad \forall i \nonumber \\ 
    & \sum_{i=1}^N \sum_{j=1}^{2^{|\mathcal{S}|}} Z_{ij} \jbar_{\truet_i}(\pi^j) \leq \frac{B}{1-\gamma} \label{eqn:lpsolver}
\end{align}
where each variable $Z^\star_{ij}$ in the solution is the probability of acting on arm $i$ using policy $\pi^j$. $\Phi(Z)$ is a regularization term that is added to make the solution differentiable with respect to $\bmthat$ (discussed in more detail below). Our overall algorithm for the decomposed evaluation of a set of predictions $\bmthat$ is then described in \cref{alg:decdfl}.

\begin{algorithm}[b]
\caption{Calculation of $\ell_{\text{DEC-DFL}}$ using $\bm{\pi}^{\textcolor{myBlue}{T}\text{-DEC}}(\bmthat)$}
\label{alg:decdfl}
\raggedright
\textbf{Input}: Predicted transition matrices $\bmthat$\\
\textbf{Parameter}: True transition matrices $\bmtruet$\\
\textbf{Output}: $\ell_{\text{DEC-DFL}}(\bmthat, \bmtruet)$
\begin{algorithmic}[1] 
\ForAll{$i \in [N]$ and $\pi^j \in 2^{|\mathcal{S}|}$}\Comment{Ideally, in parallel}
\State Get return of ``reward'' MDP and predicted transitions $\that_i$:
$$J_{\that_i}(\pi^j) \gets \textsc{GetReturns}(\that_i, R, \pi^j)$$
\State Get return of ``reward'' MDP and true transitions $\truet_i$:
$$J_{\truet_i}(\pi^j) \gets \textsc{GetReturns}(\truet_i, R, \pi^j)$$
\State Get return of ``budget'' MDP and true transitions $\truet_i$:
$$\jbar_{\truet_i}(\pi^j) \gets \textsc{GetReturns}(\truet_i, \rbar, \pi^j)$$
\EndFor
\State Solve \cref{eqn:lpsolver} using returns $J_{\bmthat}$ and $\jbar_{\bmtruet}$ calculated above:
$$Z^\star \gets Z^\star(J_{\bmthat}, \jbar_{\bmtruet})$$
\State \textbf{return} $\ell_{\text{DEC-DFL}} = \sum_i \sum_j Z^\star_{ij} \cdot J_{\truet_i}(\pi^j)$
\end{algorithmic}
\end{algorithm}

\subsubsection*{Differentiability}\label{sec:diffeval}
From the perspective of the optimization problem, $J_{\that_i}(\pi^j)$ and $\jbar_{\truet_i}(\pi^j)$ are constants. As a result, if we set $\Phi(Z) = 0$, solving \cref{eqn:lpsolver} reduces to a linear program. However, it has been shown that the solutions of linear programs are not differentiable with respect to their inputs~\cite{elmachtoub2022smart,wilder2019melding} because similar predictions almost always lead to the same decisions. To make the solutions of \cref{eqn:lpsolver} vary smoothly as $\bmthat$ changes, we add a regularization term $\Phi$ (e.g., the $L_2$ norm $||Z||_2$ of the variables or the entropy $H(Z)$) to the objective of the optimization problem. 

\subsection{Efficiently Solving \Cref{eqn:lpsolver}}\label{sec:fasteval}
The previous section provided a way to create good decomposable RMAB policies using an approximation to~\cref{eqn:relaxed-corrected}. However, the crux of the solution,~\cref{alg:decdfl}, involves incorporating the optimization problem in~\cref{eqn:lpsolver} into the DFL pipeline. One way to do this would be to use differentiable optimization packages like Cvxpylayers~\cite{cvxpylayers2019} (DEC-DFL), but this can be slow. Instead, in this section, we use the fact that all the arms are tied together only by the budget constraint to speed up~\cref{alg:decdfl} and create our final \textit{`Fast DEC-DFL'} method for RMAB parameter estimation using DFL.

\subsubsection*{Forward Pass} To solve \cref{eqn:lpsolver}, we first observe that the only thing tying together different arms is a single constraint, i.e., $\sum_{i,j} Z_{ij} \jbar_{\truet_i}(\pi^j) \leq \frac{B}{1-\gamma}$. Moreover, \cref{eqn:lpsolver} is a convex optimization problem that is strictly feasible as long as the budget $B > 0$. Then, because of strong duality via Slater's condition~\cite{boyd2004convex}, we can instead solve the following primal-dual problem:
\begin{align*}
    \min_{\lambda \geq 0}\; \argmax_{0 \leq Z_{ij} \leq 1} \quad & \sum_{i=1}^{N} \sum_{j=1}^{2^{|\mathcal{S}|}} Z_{ij} [ J_{\that_i}(\pi^j) - \lambda \jbar_{\truet_i}(\pi^j)] +\, \alpha H(Z) + \lambda\frac{B}{1 - \gamma}\\
    s.t. \quad & \sum_{j=1}^{2^{|\mathcal{S}|}} Z_{ij} = 1,\quad \forall i
\end{align*}
where, $H(Z) = -\sum_i \sum_j Z_{ij} \log Z_{ij}$ is the entropy of the distribution $Z_i$ over the different possible policies $\pi^j$ and $\alpha$ is the weight of the regularization. Then, the solution to the inner maximization problem is given by the softmax function~\cite{amos2019limited,hsieh2019finding}. Therefore we can simplify our reformulated optimization problem as:
\begin{align}
    \min_{\lambda \geq 0} \sum_{i=1}^{N} \sum_{j=1}^{2^{|\mathcal{S}|}} \Tilde{Z}^\star_{ij}(\lambda) [ J_{\that_i}(\pi^j) - \lambda \jbar_{\truet_i}(\pi^j)] +\, \lambda\frac{B}{1 - \gamma} \nonumber \\
    \text{where, }\Tilde{Z}^\star_{ij}(\lambda) = \text{softmax}_{\pi^j} \left(\frac{ J_{\that_i}(\pi^j) - \lambda \jbar_{\truet_i}(\pi^j) }{\alpha} \right) \label{eqn:mindual}
\end{align}
Now, to solve for the optimal value of the dual variable $\lambda^\star$, we rely on KKT conditions. In particular, it is well known that $\lambda^\star$ satisfies the complementary slackness~\cite{boyd2004convex} condition in~\cref{eqn:kkt_condition_lambda}. Then, to solve~\cref{eqn:mindual}, we use a numerical root-finding algorithm to find the value of $\lambda^\star$ that leads to exactly satisfying the budget constraint. Algorithm~\ref{alg:fwdpass} describes this procedure, and the following theorem proves that it does indeed return the optimal dual variable.
\begin{thm}\label{thm:dual}
    \cref{alg:fwdpass} solves for the optimal dual variable $\lambda^\star$
\end{thm}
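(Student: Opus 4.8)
The plan is to reduce the problem of finding $\lambda^\star$ to a one-dimensional monotone root-finding problem and then verify that its solution satisfies the KKT conditions. First I would note that, because of the entropy regularizer, the inner maximization in the primal--dual reformulation has a \emph{unique} maximizer $\Tilde{Z}^\star(\lambda)$ (the softmax of \cref{eqn:mindual}), so the dual objective $D(\lambda)$ is a smooth convex function of the scalar $\lambda \geq 0$, and strong duality (already established via Slater's condition) guarantees that its minimizer $\lambda^\star$ recovers the optimal $Z^\star$ of \cref{eqn:lpsolver}. By complementary slackness (\cref{eqn:kkt_condition_lambda}), $\lambda^\star$ is characterized by a dichotomy: either $\lambda^\star = 0$ and the budget constraint is slack, or $\lambda^\star > 0$ and the budget constraint holds with equality.

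Next I would make the reduction precise. Define the \emph{budget-usage} function $g(\lambda) \coloneqq \sum_{i=1}^N \sum_{j=1}^{2^{|\mathcal{S}|}} \Tilde{Z}^\star_{ij}(\lambda)\,\jbar_{\truet_i}(\pi^j)$, the expected discounted budget consumed by the softmax mixture at dual value $\lambda$. Since $\Tilde{Z}^\star(\lambda)$ is the unique inner maximizer, Danskin's (envelope) theorem gives $D'(\lambda) = \frac{B}{1-\gamma} - g(\lambda)$. Minimizing the convex $D$ over $\lambda \geq 0$ is therefore equivalent to the rule implemented by \cref{alg:fwdpass}: set $\lambda^\star = 0$ if $g(0) \leq \frac{B}{1-\gamma}$ (the unconstrained softmax already respects the budget), and otherwise solve $g(\lambda^\star) = \frac{B}{1-\gamma}$ for $\lambda^\star > 0$.

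The crux, and the step I expect to be the main obstacle, is proving that $g$ is monotone, so that this root is well defined and a bracketing method converges to it. Differentiating the per-arm softmax, I would show that the derivative of $\E_{\pi^j \sim \Tilde{Z}^\star_i(\lambda)}[\jbar_{\truet_i}(\pi^j)]$ equals $-\tfrac{1}{\alpha}$ times the variance of $\jbar_{\truet_i}(\pi^j)$ under $\Tilde{Z}^\star_i(\lambda)$, so that $g'(\lambda) = -\tfrac{1}{\alpha}\sum_{i=1}^N \mathrm{Var}_{\pi^j \sim \Tilde{Z}^\star_i(\lambda)}\!\left[\jbar_{\truet_i}(\pi^j)\right] \leq 0$. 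Hence $g$ is continuous and non-increasing (strictly decreasing wherever a per-arm budget-return is non-degenerate), so the root is unique. A secondary point is \emph{existence} of a root: since the all-zero (never-act) policy is among the enumerated per-arm policies with $\jbar_{\truet_i} = 0$, the softmax concentrates on it as $\lambda \to \infty$, giving $g(\lambda) \to 0 < \frac{B}{1-\gamma}$; combined with continuity and the case hypothesis $g(0) > \frac{B}{1-\gamma}$, the intermediate value theorem yields a root.

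Finally I would conclude that the value returned by the root-finding routine satisfies $\lambda^\star \geq 0$, budget feasibility, and complementary slackness, while stationarity and primal feasibility of the inner problem hold automatically because $\Tilde{Z}^\star(\lambda^\star)$ is the softmax maximizer. These are exactly the KKT conditions of the convex program, so $\lambda^\star$ is the optimal dual variable, which proves \cref{thm:dual}. The only genuinely technical ingredient is the variance identity establishing monotonicity of $g$; the convexity, strong-duality, and envelope arguments are standard.
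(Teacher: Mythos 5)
Your proposal is correct and follows essentially the same route as the paper: reduce the problem to finding the root of the monotone budget-usage function $\sum_{i,j}\Tilde{Z}^\star_{ij}(\lambda)\,\jbar_{\truet_i}(\pi^j) - \frac{B}{1-\gamma}$, establish monotonicity via the softmax variance identity (this is exactly the paper's Proposition~\ref{prop:softmax_monotonicity}), and then dispatch the two cases of complementary slackness. Your additions --- Danskin's theorem identifying this quantity as $-D'(\lambda)$, and the explicit existence-of-root argument via concentration on the never-act policy as $\lambda\to\infty$ --- are sound refinements of the same argument rather than a different approach.
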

\begin{proof}
    Based on KKT conditions, we know that any $\lambda^* \geq 0$ satisfying the following condition is an optimal solution to \cref{eqn:lpsolver}:
    \begin{align}
    \label{eqn:kkt_condition_lambda}
        \lambda^\star \left(\sum_{i=1}^{N} \sum_{j=1}^{2^{|\mathcal{S}|}} Z^\star_{ij}(\lambda)\jbar_{\truet_i}(\pi^j) - \frac{B}{1-\gamma}\right) = 0
    \end{align}
    First, observe that $\sum_{i=1}^{N} \sum_{j=1}^{2^{|\mathcal{S}|}} Z^\star_{ij}(\lambda)\jbar_{\truet_i}(\pi^j)$ decreases monotonically in $\lambda$. This follows from \cref{eqn:mindual} and the properties of softmax (see Proposition~\ref{prop:softmax_monotonicity} for a proof).
    Intuitively, $\lambda$ can be thought of as the ``cost of acting''. Then, as $\lambda \to \infty$ you will never act because the cost is too high, and if $\lambda \to -\infty$ you are incentivized to always act.
    
    Now consider the following equation: $\sum_{i=1}^{N} \sum_{j=1}^{2^{|\mathcal{S}|}} Z^\star_{ij}(\lambda)\jbar_{\truet_i}(\pi^j) - B/(1-\gamma) = 0$. Because of the strict monotonicity of $Z^\star_{ij}(\lambda)$ the equation has a unique root. If this root is positive, then it satisfies the KKT condition in Equation~\eqref{eqn:kkt_condition_lambda} and is hence an optimizer. In this case, the budget constraint is tight. On the other hand, if the root is negative, then the budget constraint has a slack and the unique optimal solution is $\lambda^\star = 0$.
\end{proof}
\cref{alg:fwdpass} exploits the monotonicity of $ \sum_{i,j} Z^\star_{ij}(\lambda)\jbar_{\truet_i}(\pi^j)$ to efficiently find a root. It uses bisection method~\cite{brent2013algorithms} and requires at most $\log\epsilon^{-1}$ calls to $\textsc{EvalLambda}$ to find an $\epsilon$-approximate root.
Consequently, the forward pass takes $O(N\cdot 2^{|\mathcal{S}|} \cdot \log\epsilon^{-1}) $ time because each call to \textsc{EvalLambda} takes $O(N \cdot 2^{|\mathcal{S}|})$ time.

\begin{algorithm}[t]
\caption{\textsc{ForwardPass}}
\label{alg:fwdpass}
\raggedright
\textbf{Inputs}: The Expected Returns $J_{\bmthat}$ and $\jbar_{\bmtruet}$ \\
\textbf{Parameter}: Error tolerance $\epsilon$, Budget $B$, Max reward $R_{max}$\\
\textbf{Output}: Distribution $Z^\star$ over arms $i \in [N]$ and policies $\pi^j$
\begin{algorithmic}[1] 
\Procedure{EvalLambda}{$\lambda$}
\State Compute $\Tilde{Z}^\star(\lambda) \gets \text{softmax}_{\pi^j}([J_{\that_i} - \lambda \jbar_{\truet_i}]), \forall i$
\State \textbf{return} $\sum_{ij} \Tilde{Z}^\star_{ij}(\lambda) \jbar_{\truet_i}(\pi^j) - \frac{B}{1-\gamma}$
\EndProcedure
\vspace{1em}
\State Set interval to $I = [-\frac{R_{\text{max}}}{1 - \gamma}, \frac{R_{\text{max}}}{1 - \gamma}]$ \Comment{$\frac{R_{\text{max}}}{1 - \gamma} 
= \max (J_{\bmthat})$}
\State Run the root-finding algorithm to get the optimal penalty $\lambda^\star$:
$$\lambda^\star \gets \textsc{RootFinder}(\textsc{EvalLambda}, I, \epsilon)$$
\State Ignore constraint if $\lambda^\star < 0$, i.e., the constraint is not violated:
$$\lambda^\star \gets \max(\lambda^\star, 0)$$
\State \textbf{return} $\lambda^\star$, $Z^\star \gets \text{softmax}_{\pi^j}([J_{\that_i} - \lambda^\star \jbar_{\truet_i}]), \forall i$
\end{algorithmic}
\end{algorithm}

\subsubsection*{Backward Pass} The goal of the backward pass is to find the derivatives of the minimizer $Z^\star$ with respect to its inputs, i.e., $\nabla_{J_{\bmthat}} Z^\star$ and $\nabla_{\jbar_{\bmtruet}} Z^\star$. To do this, we differentiate through the KKT conditions of \Cref{eqn:lpsolver} and solve the resulting set of linear equations~\cite{amos2017optnet}. Specifically, for a convex program of the form:
\begin{align*}
    \max_z \quad & q^\top z + H(z) \\
    s.t. \quad & Az = b, Gz \leq h
\end{align*}
we get the following set of linear equations:
\begin{align}\label{eqn:kkt}
    \begin{bmatrix}
        \text{diag}(\frac{-1}{z^\star}) & A^\top & G^\top \text{diag}(\lambda^\star)\\
        A & 0 & 0 \\
        G & 0 & - \text{diag}(Gz^\star - h)
    \end{bmatrix}
    \begin{bmatrix}
        d_z \\
        d_\nu \\
        d_\lambda
    \end{bmatrix} = 
    \begin{bmatrix}
        \frac{\partial \ell}{\partial z^\star} \\
        0 \\
        0
    \end{bmatrix}
\end{align}
where (1) $[d_z, d_\nu, d_\lambda]$ are intermediate variables that relate to the gradients of $\ell$ with respect to the parameters of the optimization problem, and (2) $\frac{\partial \ell}{\partial z^\star}$ is the derivative of the evaluation function with respect to the minimizer $z^\star$ and is the input to the backward pass. Then, given the solution to the set of linear equations above, we can extract the derivatives of interest as follows:
\begin{align*}
    \nabla_q \ell = & \nabla_{J_{\bmthat}} \ell = d_z \\
    \nabla_G \ell = & \nabla_{\jbar_{\bmtruet}} \ell = \lambda^\star (d_z - d_\lambda z^{\star})
\end{align*}
The key challenge in the backward pass is in efficiently solving the set of linear equations in \cref{eqn:kkt}. Given that there are $N \cdot 2^{|\mathcal{S}|} + N + 1$ variables, naively solving these equations would be order $O(N^3)$. However, given the sparsity of the matrix, we can use Gaussian elimination to derive a closed-form solution to \cref{eqn:kkt}.

To do this, we begin by considering the simpler case, where there is no budget constraint. The set of equations in \cref{eqn:kkt} can then be completely decomposed into the following per-arm equations:
\begin{align*}
    \begin{bmatrix}
        \text{diag}(\frac{-1}{Z^\star_i}) & \bm{1}_{2^{|\mathcal{S}|}} \\
        \bm{1}_{2^{|\mathcal{S}|}}^\top & 0 
    \end{bmatrix}
    \begin{bmatrix}
        d_{z_i} \\
        d_{\nu_i} \\
    \end{bmatrix}
    =
    \begin{bmatrix}
        \frac{\partial \ell}{\partial Z^\star_i } \\
        0
    \end{bmatrix}
    = 
    \begin{bmatrix}
        J_{\truet_i} \\
        0
    \end{bmatrix}
\end{align*}
and the reduced row-echelon form of the augmented matrix is:
\begin{align*}
    \left [
    \begin{array}{@{}cc|c@{}}
        \text{diag}(\frac{-1}{Z^\star_i}) & \bm{0}_{2^{|\mathcal{S}|}} & J_{\truet_i} - J_{\truet_i}^\top Z^\star_{i} \\
        \bm{0}_{2^{|\mathcal{S}|}}^\top & 1 & J_{\truet_i}^\top Z^\star_{i}
    \end{array}
    \right ]
\end{align*}
Next, we put the budget constraint back in and rewrite the system of equations as the following augmented matrix:
\begin{align*}
    \left [
    \begin{array}{@{}ccc|c@{}}
        \text{diag}(\frac{-1}{Z^\star}) & \bm{0}_{N \cdot 2^{|\mathcal{S}|} \times N} & \lambda^\star \jbar_{\bmtruet} & 
        {\tiny \begin{bmatrix}
            \vdots \\
            J_{\truet_i} - J_{\truet_i}^\top Z^\star_{i} \\
            \vdots
        \end{bmatrix}} \\
        \bm{0}_{N \times N \cdot 2^{|\mathcal{S}|}} & Id_{N \times N} & \bm{0}_{N} & 
        {\tiny \begin{bmatrix}
            \vdots \\
            J_{\truet_i}^\top Z^\star_{i} \\
            \vdots
        \end{bmatrix}} \\
    \jbar_{\bmtruet}^{\,\top} & \bm{0}_{N}^\top & \xi & 0
    \end{array}
    \right ]
\end{align*}
where $\xi = \frac{B}{1-\gamma} -\sum_{ij} \Tilde{Z}^\star_{ij} \jbar_{\truet_i}(\pi^j)$ is the amount of ``slack'' budget left over. We can then perform Gaussian elimination on the budget constraint and back-substitute to get the values of $[d_z, d_\nu, d_\lambda]$; we do not show the exact calculations here because they're clunky, but this can easily be solved algorithmically. In addition, given that we're performing a constant number of operations on $O(N \cdot 2^{|\mathcal{S}|})$ variables, our backward pass has an $O(N)$ complexity.

\begin{table*}
\centering
\caption{Decision Quality Results. \textmd{We document the performance of linear models trained using various loss functions in the table below. The values in \textbf{bold} represent the highest entries in the column, and those in \textit{italics} are those that are in the 95\% confidence interval of the maximum value. We find that our proposed loss functions consistently outperform the baselines from the literature.}}
\label{tab:results}
\resizebox{0.95\linewidth}{!}{%
\begin{tabular}{cccccccc}
\toprule
&\multirow{2}[3]{*}{\textbf{Loss}} & \multicolumn{3}{c}{\textbf{Normalized Joint Test DQ (↑)}} & \multicolumn{3}{c}{\textbf{Normalized Decomposed Test DQ (↑)}}\\
\cmidrule(lr){3-5} \cmidrule(lr){6-8}
&& Real-World & \makecell{Synthetic\\(2-State)}  & \makecell{Synthetic\\(5-State)} & Real-World & \makecell{Synthetic\\(2-State)}  & \makecell{Synthetic\\(5-State)} \\
\midrule
\multirow{2}{*}{2-Stage} & \makecell{NLL} & 0.04 $\pm$ 0.06 & 0.59 $\pm$ 0.13 & 0.16 $\pm$ 0.04 & -0.27 $\pm$ 0.11 & 0.64 $\pm$ 0.18 & 0.15 $\pm$ 0.07\\
& MSE & 0.10 $\pm$ 0.06 & 0.83 $\pm$ 0.03 & \textbf{0.38 $\pm$ 0.03} & -0.19 $\pm$ 0.10 & 0.86 $\pm$ 0.04 & \textit{0.37 $\pm$ 0.03}\\
\midrule
\multirow{4}{*}{\citet{wang2023scalable}} & \makecell{SIM-DFL (1 trajectory)} & -0.05 $\pm$ 0.07 & 0.78 $\pm$ 0.03 & 0.15 $\pm$ 0.02 & -0.33 $\pm$ 0.08 & 0.82 $\pm$ 0.06 & 0.16 $\pm$ 0.04 \\
& \makecell{SIM-DFL (10 trajectories)} & 0.34 $\pm$ 0.15 & 0.79 $\pm$ 0.03 & 0.16 $\pm$ 0.03 & 0.21 $\pm$ 0.20 & 0.84 $\pm$ 0.06 & 0.15 $\pm$ 0.02\\
& \makecell{SIM-DFL (100 trajectories)} & 0.26 $\pm$ 0.10 & 0.80 $\pm$ 0.02 & 0.19 $\pm$ 0.03 & 0.15 $\pm$ 0.11 & 0.86 $\pm$ 0.03 & 0.19 $\pm$ 0.04\\
& \makecell{SIM-DFL (1000 trajectories)} & \textit{Timeout} & 0.80 $\pm$ 0.02 & 0.18 $\pm$ 0.03 & \textit{Timeout} & 0.84 $\pm$ 0.04 & 0.20 $\pm$ 0.05\\
\midrule
\multirow{3}{*}{Ours} & DEC-DFL (L2) & \textit{0.58 $\pm$ 0.04} & \textbf{0.86 $\pm$ 0.02} & \textit{0.34 $\pm$ 0.04} & \textit{0.50 $\pm$ 0.04} & \textbf{0.91 $\pm$ 0.01} & \textbf{0.38 $\pm$ 0.03}\\
& DEC-DFL (Entropy) & \textbf{0.62 $\pm$ 0.05} & \textbf{0.86 $\pm$ 0.02} & \textit{0.33 $\pm$ 0.04} & \textbf{0.52 $\pm$ 0.05} & \textbf{0.91 $\pm$ 0.01} & \textit{0.35 $\pm$ 0.03}\\
& Fast DEC-DFL (Entropy) & \textit{0.57 $\pm$ 0.12} & \textbf{0.86 $\pm$ 0.02} & \textit{0.33 $\pm$ 0.04} & \textit{0.45 $\pm$ 0.14} & \textbf{0.91 $\pm$ 0.01} & \textit{0.35 $\pm$ 0.03} \\
\bottomrule
\end{tabular}%
}
\end{table*}

\section{Experiments}
\label{sec:results}
In this section, we empirically test our proposed approach on two domains and compare it to baselines from the literature.

\subsubsection*{Real-World Dataset} This is the same dataset used by \citet{wang2023scalable}. We use the data from a large-scale anonymized quality improvement study performed by ARMMAN for 7 weeks~\cite{mate2022field} with beneficiary consent. We choose the cohort that received randomized interventions and randomly split it into 60 training, 20 validation, and 20 test sub-cohorts. Each sub-cohort has $N=76$ beneficiaries and a budget of $B=3$.
For the features $\bm{x}$, we use 44 categorical demographic features captured during program intake, e.g., age, education, and income level. For the transitions, we first create trajectories for each beneficiary from their historical listenership. We do this by discretizing engagement into 2 states---an engaging beneficiary listens to the weekly automated voice message (average length 60 seconds) for more than 30 seconds---and sequencing them to create an array $(s_0, a_0, s_1, \ldots)$. Then, to get the transition matrix for beneficiary $i$, we combine the observed transitions with $P_{\text{pop}}$, a prior created by pooling all the beneficiaries' trajectories together:
\begin{align*}
    T_i(s, a, s') = P_i(s' | s, a) = \frac{\alpha P_{\text{pop}}(s' | s, a) + N(s, a, s')}{\sum_{x \in \mathcal{S}} \alpha P_{\text{pop}}(x | s, a) + N(s, a, x)}
\end{align*}
where $N(s, a, s')$ is the number of times the sub-sequence $s, a, s'$ occurs in the trajectory, and $\alpha = 5$ is the strength of the prior.

\subsubsection*{Synthetic Dataset} We also create a synthetic dataset for which it's easier to control for important hyperparameters, e.g., the number of states $|\mathcal{S}|$ in the per-beneficiary MDP. Here, we generate the transition matrices $T$ uniformly at random. We also generate trajectories of $10$ timesteps based on these transition matrices. Then, to create the features, we pass the transition matrices through a randomly initialized $8-$layer feedforward network with a hidden dimension of $1000$. We then generate $100$ cohorts of $N = 100$ beneficiaries with a budget of $B = 10$ per cohort. We split these cohorts into $20$ train, $20$ validation, and $60$ test sub-cohorts.

\subsubsection*{Baselines} Broadly, we compare against two sets of baselines---(1) ``standard'' regression loss functions that focus on predictive accuracy, and (2) the DFL approach proposed by \citet{wang2023scalable}. For the first, we use the Mean Squared Error between the predicted and true transition matrices (used by \citet{mate2022field}), and the Negative Log Likelihood (NLL) that the predicted transition matrices generate the observed trajectories (used as a baseline by \citet{wang2023scalable}). For the second, we use \citet{wang2023scalable}'s SIM-DFL approach and vary the number of \textit{simulated trajectories} used to evaluate the Whittle Index policy, to show the trade-off between cost and learned model quality. We compare these baselines to our proposed approach (\cref{alg:decdfl}), in which we solve \cref{eqn:lpsolver} using either the Cvxpylayers library~\cite{cvxpylayers2019} (DEC-DFL) or the strategy in \cref{sec:fasteval} (Fast DEC-DFL). We use these different approaches to train a linear predictive model.

\subsubsection*{Evaluation Metrics} We evaluate the quality of our learned models $M_\theta$ using the predict-then-optimize framework (\cref{eqn:dfl}):
\begin{align*}
 \text{DQ}(M_\theta) = \E_{\bm{x}, \bmtruet \sim \mathcal{D}} \left [ J_{\bmtruet}(\bm{\pi}^\star(M_\theta(\bm{x}))) \right ]
\end{align*}
where DQ is the ``decision quality'' of the model. We approximate the value of the expectation using samples from the test set, resulting in the `Test DQ'. In addition, we make the following modifications:
\begin{itemize}[leftmargin=1em,nosep]
    \item \textbf{Policy Approximation:} As discussed in \cref{sec:related}, calculating $\bm{\pi}^\star$ is PSPACE-Hard and so we either evaluate the models using $\bm{\pi}^\star \approx \bm{\pi}^{\text{WI}}$ as in past work~\cite{wang2023scalable} to get the ``Joint DQ'', or $\bm{\pi}^\star \approx \bm{\pi}^{\truet\text{-DEC}}$ to get the ``Decomposed DQ''. We use 1000 trajectories to evaluate the Joint Test DQ in the experiments below.
    \item \textbf{Normalization:} In order to ensure that we're focusing on the \textit{intervention effect}, we linearly re-scale the decision quality such that 0 corresponds to the DQ of never acting and 1 corresponds to acting based on perfect predictions.
\end{itemize}
Putting these together we get our metrics of interest, i.e., the `Normalized Joint Test DQ' and the `Normalized Decomposed Test DQ'. The policy used in practice is $\bm{\pi}^{\text{WI}}$, and so the former metric is a good representation of how well the learned models would do if deployed. The latter is the surrogate we introduce in this paper; measuring this allows us to empirically verify that our proposed objective is well-correlated with the true objective of interest.

\subsubsection*{Hyperparameter Tuning} For our experiments, we vary the learning rate $\text{lr} = \{10^{-2}, 10^{-3}, 10^{-4}, 10^{-5}\}$ and for our approach we also vary the regularization constant $\alpha = \{1, 0.1\}$. All our results are averaged over 10 random train-val-test splits and 5 random model initializations per split. We then choose the hyperparameter value which leads to the lowest loss on the validation set. The final results are presented as ``mean $\pm$ 1 standard error of the mean''.

\begin{figure*}
\centering
\begin{subfigure}[c]{0.57\textwidth}
\centering
\resizebox{\linewidth}{!}{%
\begin{tabular}{cccc}
\toprule
\multirow{2}[3]{*}{\textbf{Loss}} & \multicolumn{3}{c}{\textbf{Time Per Epoch In Seconds (↓)}}\\
\cmidrule(lr){2-4}
& Real-World & \makecell{Synthetic\\(2-State)}  & \makecell{Synthetic\\(5-State)} \\
\midrule
\makecell{NLL} & 0.69 $\pm$ 0.08 & 0.22 $\pm$ 0.05 & 0.24 $\pm$ 0.07\\
MSE & 0.49 $\pm$ 0.03 & 0.15 $\pm$ 0.01 & 0.18 $\pm$ 0.06\\
\midrule
\makecell{SIM-DFL (1 trajectory)} & 18.20 $\pm$ 2.78 & 6.51 $\pm$ 1.01 & 18.18 $\pm$ 0.96\\
\makecell{SIM-DFL (10 trajectories)} & 21.34 $\pm$ 1.90 & 8.83 $\pm$ 1.77 & 19.97 $\pm$ 2.08 \\
\makecell{SIM-DFL (100 trajectories)} & 51.10 $\pm$ 1.57 & 29.33 $\pm$ 11.20 & 34.07 $\pm$ 2.00 \\
\makecell{SIM-DFL (1000 trajectories)} & 503.24 $\pm$ 32.16 & 305.69 $\pm$ 130.77 & 246.48 $\pm$ 57.47 \\
\midrule
DEC-DFL (L2) & 4.63 $\pm$ 0.15 & 2.20 $\pm$ 0.40 & 46.28 $\pm$ 4.00\\
DEC-DFL (Entropy) & 19.51 $\pm$ 2.30 & 11.61 $\pm$ 0.87 & 289.62 $\pm$ 49.61\\
Fast DEC-DFL (Entropy) & 1.07 $\pm$ 0.10 & 0.39 $\pm$ 0.03 & 0.70 $\pm$ 0.16 \\
\bottomrule
\end{tabular}%
}
\caption{Time taken by different methods for a single training epoch.}
\label{tab:time}
\end{subfigure}
\hfil
\begin{subfigure}[c]{0.4\textwidth}
    \includegraphics[width=\linewidth]{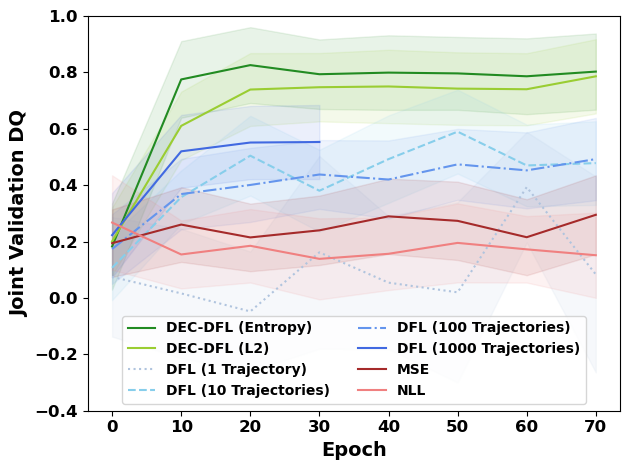}
    \vspace{-1.5em}
    \caption{Validation DQ vs. Epoch on Real-World Dataset.}
    \label{fig:dq_vs_epoch}
\end{subfigure}
\vspace{-0.5em}
\caption{Computational Cost Results. \textmd{In (a), we find that our proposed ``Fast DEC-DFL'' loss is roughly 500x faster than the ``SIM-DFL (1000 Trajectories)'' loss proposed by \citet{wang2023scalable}. In (b), we show that this speed-up does not come at any cost in terms of the rate of convergence. In fact, ``DEC-DFL'' outperforms even ``SIM-DFL (1000 Trajectories)'' till the latter times out.}}
\label{fig:compute}
\end{figure*}

\subsection{Overall Results}\label{sec:overallresults}
In this section, we analyze the results of our experiments, presented in \cref{tab:results} and \cref{fig:compute}. Overall, we find that our `Fast DEC-DFL' approach described in~\cref{sec:fasteval} yields a speed-up of up to 500x over past work while also achieving comparable model performance.

\vspace{0.3em}\noindent We now look more closely at our decision quality results in \cref{tab:results}:
\begin{itemize}[leftmargin=1em,nosep]
    \item \textit{DFL is important in the real-world domain:} The 2-stage methods do significantly worse than both SIM-DFL and DEC-DFL in the real-world domain. This is consistent with past work~\cite{verma2023restless,wang2023scalable}.
    \item \textit{DFL is less useful in the simulated domain:} In the 2-state domain, we find that DEC-DFL performs only slightly better than MSE, and in the 5-stage domain, this difference disappears almost completely. We believe that this is because the true data-generating process is a lot noisier than the one we use to create our synthetic domain, and that is where DFL is particularly useful.
    \item \textit{Decomposed Test DQ mirrors Joint Test DQ:} Broadly, we find that the ordering of methods according to the Decomposed DQ mirrors the ordering according to the Joint DQ, which is used in practice. This suggests that our decomposed evaluation method is a good way to measure decision quality.
    \item \textit{DEC-DFL consistently does better than SIM-DFL:} This is true regardless of the number of trajectories used in SIM-DFL. This, combined with the fact that we continue to see an improvement in DQ as the number of trajectories increases in \cref{fig:dq_vs_epoch}, suggests that even 1000 trajectories are not enough for accurate simulation-based evaluations.
\end{itemize}

\vspace{0.3em}
\noindent We now analyze the computational time results in \cref{fig:compute}. 
\begin{itemize}[leftmargin=1em,nosep]
    \item \textit{Fast DEC-DFL is 500x faster than SIM-DFL (1000 trajectories):} In addition, Fast DEC-DFL even has better performance than SIM-DFL, as seen in \cref{fig:dq_vs_epoch}.
    \item \textit{DEC-DFL does not scale well in $|\mathcal{S}|$:} We see that in going from 2 to 5 states, the computational cost of DEC-DFL increases by $\approx$20x, which is even higher than the $\frac{2^5}{2^2} = 8$ increase in number of policies required to solve \cref{eqn:lpsolver}. This is because naively solving \cref{eqn:kkt} requires inverting a matrix of dimension $O(N\cdot2^{|\mathcal{S}|})$.
    \item \textit{The convergence rate is similar for all methods:} We see in~\cref{fig:dq_vs_epoch} that all the methods seem to converge after a similar number of epochs. This suggests that the per-epoch difference in computational cost from \cref{tab:time} extends to the overall computational cost of training predictive models using the different methods.
\end{itemize}
\vspace{-0.2em}

\subsection{Sensitivity To Model Capacity}
In \cref{sec:overallresults} our results are presented for linear predictive models $M_\theta$. Here, we show that our findings hold \textit{even if we use more complex predictive models}. In \cref{tab:modelcap} we find that:
\begin{itemize}[leftmargin=1em,nosep]
    \item \textit{Increasing model capacity helps 2-stage and SIM-DFL:} We find that increasing model capacity from `small' to `medium' or `large' seems to boost performance when using the `MSE' or `SIM-DFL (1 Trajectory)' losses. However, even a \textit{linear} model trained using DEC-DFL outperforms all other baselines.
    \item \textit{Model capacity does not affect DEC-DFL:} Increasing model capacity does not seem to help when using the DEC-DFL approach. In~\cref{sec:viz}, we visualize the predictions of different approaches and show that DEC-DFL finds beneficiaries that would benefit from interventions even with limited model capacity.
\end{itemize}
\begin{table}
\centering
\caption{Sensitivity to Model Capacity. \textmd{We report the performance of models of varying sizes in the table below.}}
\label{tab:modelcap}
\resizebox{\linewidth}{!}{%
\begin{tabular}{cccc}
\toprule
\multirow{2}[3]{*}{\textbf{Loss}} & \multicolumn{3}{c}{\textbf{Normalized Joint Test DQ (↑)}}\\
\cmidrule(lr){2-4}
& \makecell{Small\\(Linear)} & \makecell{Medium\\(2-Layer, 64 Dim)} & \makecell{Large\\(4-Layer, 500 Dim)}\\
\midrule
\makecell{NLL} & 0.04 $\pm$ 0.06 & 0.01 $\pm$ 0.07 & 0.04 $\pm$ 0.06 \\
MSE & 0.10 $\pm$ 0.06 & 0.35 $\pm$ 0.12 & 0.34 $\pm$ 0.11 \\
\midrule
\makecell{SIM-DFL\\\textcolor{black!40}{(1 trajectory)}} & -0.05 $\pm$ 0.07 & 0.36 $\pm$ 0.07 & 0.30 $\pm$ 0.18 \\
\makecell{SIM-DFL\\\textcolor{black!40}{(10 trajectories)}} & 0.34 $\pm$ 0.15 & 0.47 $\pm$ 0.20 & 0.36 $\pm$ 0.27\\
\makecell{SIM-DFL\\\textcolor{black!40}{(100 trajectories)}} & 0.26 $\pm$ 0.10 & 0.44 $\pm$ 0.21 & 0.33 $\pm$ 0.28 \\
\midrule
\makecell{DEC-DFL\\\textcolor{black!40}{(L2 Reg})} & 0.58 $\pm$ 0.04 & 0.59 $\pm$ 0.04 & 0.61 $\pm$ 0.03\\
\makecell{DEC-DFL\\\textcolor{black!40}{(Entropy)}} & \textbf{0.62 $\pm$ 0.05} & \textbf{0.60 $\pm$ 0.06} & \textbf{0.62 $\pm$ 0.04}\\
\makecell{Fast DEC-DFL\\\textcolor{black!40}{(Entropy)}} & 0.57 $\pm$ 0.12 & \textbf{0.60 $\pm$ 0.04} & 0.61 $\pm$ 0.04\\
\bottomrule
\end{tabular}%
}
\end{table}
\section{Conclusion and Future Work}
Overall, we propose a novel approach, `Fast DEC-DFL', for solving RMABs in the DFL setting. Our approach efficiently calculates decomposable policies that are cheap to evaluate. This results in a \textbf{500x} speedup over state-of-the-art methods on real-world data from ARMMAN, while also improving model performance. Concretely, where past work (`SIM-DFL (1000) Trajectories') can take more than a day to train for our dataset with $\approx 5000$ beneficiaries, Fast DEC-DFL takes minutes. This gain in speed with the added benefit of improved accuracy paves the way for DFL-based RMAB models to be deployed more widely and at larger scale. For example, this could potentially help ARMMAN with their ongoing efforts to boost engagement in their Kilkari program---the largest maternal mHealth program in the world~\cite{kilkari}, with 3 million active subscribers.

\begin{acks}
This material is based upon work supported by the NSF under Grant No. IIS-2229881. Any opinions, findings, and conclusions or recommendations expressed in this material are those of the author(s) and do not necessarily reflect the views of the NSF.
\end{acks}



\clearpage
 
\section{Ethics Statement}
\subsubsection*{Secondary Analysis and Data Usage} The experiments with the ARMMAN dataset fall into the category of secondary analysis of the aforementioned dataset. We use previously collected listenership trajectories of beneficiaries enrolled in the mMitra program. The dataset is anonymized and contains no personally identifiable information. The dataset is owned by ARMMAN and only they can share it further.

\subsubsection*{Consent for Data Collection and Sharing}
Consent for collecting data is obtained from each participant in the service call program. The data collection process is carefully explained to the participants before collecting the data. Data exchange and use were regulated through clearly defined exchange protocols including anonymization by ARMMAN, read-only access to researchers, restricted use of the data for research purposes only, and approval by ARMMAN’s ethics review committee.

\subsubsection*{Universal Accessibility of Health Information}
This study focuses on improving the effectiveness of only the \textit{live service calls}. All participants will receive the same weekly health information by automated message regardless of whether they are scheduled to receive service calls or not. The service call program does not withhold any information from the participants nor conduct any experimentation on the health information. Moreover, all participants can request service calls via a free missed call.

\subsubsection*{Road To Deployment}
The next steps involve testing our algorithm on more recent data to make sure our algorithm continues to show gains, and to run an equity audit to make sure that our algorithm prioritizes vulnerable subgroups.
We then plan to conduct a randomized field trial to evaluate the accuracy of the algorithm and verify the computational gains over the currently deployed DFL pipeline. We hope for such a model to potentially showcase its strengths in applying DFL in a cost-effective way at such a massive scale.
We must highlight, that all the above steps will be conducted with constant collaboration with ARMMAN; with ARMMAN ultimately being in charge of the actual deployment.

\bibliographystyle{ACM-Reference-Format} 
\bibliography{ref}


\clearpage
\onecolumn
\appendix

\section{Efficiently Calculating the Returns of Decomposed Policy}\label{sec:returns}

\begin{center}
\begin{minipage}{0.4\textwidth}
\begin{algorithm}[H]
\caption{\textsc{GetReturns}}
\label{alg:returns}
\raggedright
\textbf{Input}: Transition matrices $T$, Rewards $R$, Policy $\pi$\\
\textbf{Output}: Expected return $J_{T}(\pi)$
\begin{algorithmic}[1] 
\State Get the markov transitions induced by the policy $\pi$:
$$T_\pi(s, s') \gets T(s, \pi(s), s')$$
\State Get the corresponding value function:
$$V \gets (I - \gamma T_\pi)^{-1} R$$
\State Multiply $V$ with the initial state distribution:
$$J_{T}(\pi) \gets \E_{s_0} [V(s_0)]$$
\State \textbf{return} $J_{T}(\pi)$
\end{algorithmic}
\end{algorithm}
\end{minipage}
\end{center}

\section{Proof of Theorem~\ref{thm:badminima}}
For the sake of clarity, we restate the Theorem~\ref{thm:badminima} below.
\begin{thm*}
  Predicting $\bmthat = \bmtruet$ is \textbf{not} always a maximizer of the Predict-Then-Optimize problem below:
    \begin{align*}
        \textcolor{myOrange}{\bmthat^\star} = \argmax_{\bmthat}\;  J_{\bmtruet}(\bm{\pi}^{\that\text{-DEC}}(\bmthat))
    \end{align*}
\end{thm*}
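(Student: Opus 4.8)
The statement is an existence-of-failure claim, so the natural route is a proof by counterexample, exactly as the proof sketch flags: I will exhibit a true matrix $\bmtruet$ and a prediction $\bm{\Tilde{T}} \neq \bmtruet$ for which $J_{\bmtruet}(\bm{\pi}^{\that\text{-DEC}}(\bm{\Tilde{T}})) > J_{\bmtruet}(\bm{\pi}^{\that\text{-DEC}}(\bmtruet))$. The guiding intuition, inherited from \cref{ex:violation}, is that \cref{eqn:relaxed} charges the budget against the \emph{predicted} dynamics; so by predicting a transition matrix under which the reward-maximizing behaviour looks ``self-sustaining'' (one action suffices to reach an absorbing high-reward state), one can smuggle in a policy that, when run on the true dynamics, keeps acting and hence spends far more than the true budget---effectively buying extra budget with a dishonest prediction.

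Concretely, I would work with a single-arm, two-state MDP and reuse the prediction $\bm{\Tilde{T}}$ from \cref{ex:violation}, under which ``act in state $0$'' drives the arm into the absorbing good state $1$ and costs only one discounted unit of budget. I would then choose the true dynamics adversarially so that the same behaviour is expensive: let the passive action always send the arm to state $0$ and the active action always send it to state $1$, with initial state $0$. Under these true dynamics, maintaining reward requires acting in essentially every step, so true reward and true budget are locked together along an efficient frontier of slope $\gamma$.

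The calculation then splits into three small pieces. First, under $\bm{\Tilde{T}}$ the constrained maximizer $\bm{\pi}^{\that\text{-DEC}}(\bm{\Tilde{T}})$ is the policy ``act in $0$, idle in $1$'', whose predicted budget $\jbar_{\bm{\Tilde{T}}}$ is a single unit and hence feasible whenever $B \geq 1-\gamma$. Second, evaluating this fixed policy on $\bmtruet$ makes the arm oscillate $0 \to 1 \to 0 \to \cdots$, yielding true return $\tfrac{\gamma}{1-\gamma^2}$ while actually consuming budget $\tfrac{1}{1-\gamma^2}$. Third, the honest policy $\bm{\pi}^{\that\text{-DEC}}(\bmtruet)$ is the optimum of the relaxed problem under the true budget $\tfrac{B}{1-\gamma}$; since every efficient policy on $\bmtruet$ has reward-to-budget ratio exactly $\gamma$, its value is $\tfrac{\gamma B}{1-\gamma}$. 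Comparing the two, the dishonest prediction strictly wins precisely when $B < \tfrac{1}{1+\gamma}$, so any $B \in [\,1-\gamma,\ \tfrac{1}{1+\gamma})$ completes the counterexample (this window is nonempty for every $\gamma \in (0,1)$, since $1-\gamma < \tfrac{1}{1+\gamma} \Leftrightarrow 1-\gamma^2 < 1$).

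I expect the main obstacle to be the third piece: pinning down the honest optimum. Unlike the dishonest choice, which is a single deterministic rule I can read off directly, the honest policy solves the relaxed LP over \emph{mixtures} of deterministic policies, so I must argue its optimal value is truly $\tfrac{\gamma B}{1-\gamma}$ and not larger. This reduces to enumerating the four deterministic policies from the start state, computing each one's $(J_{\bmtruet}, \jbar_{\bmtruet})$ pair, and checking that they all lie on (or below) the single slope-$\gamma$ line through the origin, so that no mixture can beat budget-proportional reward. The remaining work is pure bookkeeping: confirming that the $\argmax$ under $\bm{\Tilde{T}}$ genuinely selects the cheap policy (it attains the maximal attainable predicted return $\tfrac{\gamma}{1-\gamma}$ at minimal budget) and that the state-$0$ start makes the other deterministic policies either degenerate to the origin or coincide with the frontier.
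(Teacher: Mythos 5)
Your proposal is correct, and it rests on the same core mechanism as the paper's proof: a counterexample exploiting the fact that the budget constraint in \cref{eqn:relaxed} is charged against the \emph{predicted} dynamics, so a dishonest prediction can ``buy'' budget that the honest prediction cannot. The concrete counterexamples are genuinely different, however. The paper uses a \emph{two-arm} instance with true matrices $\tgood$ and $\tbad$ and budget $B=\tfrac{1}{1+\gamma}$: the honest prediction rations the shared budget and can only afford to act on one arm, whereas predicting the ``best-case'' matrix of \cref{ex:violation} for both arms makes each appear to cost a single unit, so the relaxed planner acts on both and earns strictly more true return. Your version is more minimal: a single arm whose honest relaxed optimum is throttled to $\gamma B/(1-\gamma)$ --- your plan for pinning this down is sound, since all four deterministic policies from the state-$0$ start lie on or below the slope-$\gamma$ reward-versus-budget line, so no mixture beats budget-proportional reward --- while the cheap-looking prediction admits the oscillating policy worth $\gamma/(1-\gamma^2)$ on the truth, giving a strict gap for any $B\in[1-\gamma,\tfrac{1}{1+\gamma})$. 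One point you gloss over but that does hold: the $\argmax$ under $\bm{\Tilde{T}}$ is not unique (mixtures placing small mass on ``always act'' are also maximizers), but every such maximizer attains true return at least $\gamma/(1-\gamma^2)$, and every honest maximizer attains exactly $\gamma B/(1-\gamma)$, so the comparison is robust to tie-breaking. What the paper's example buys over yours is that it exhibits the failure as a \emph{misallocation across arms} --- the resource-allocation error that actually matters in the multi-beneficiary setting --- and it works with a single fixed budget; what yours buys is a smaller instance and an explicit closed form for the honest optimum.
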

\begin{proof}
    Consider a 2-state RMAB with $2$ arms, $\gamma = 0.9$, and a budget $B = \frac{1}{1 + \gamma}$ (i.e., expected budget $\frac{B}{1 - \gamma} = \frac{1}{1 - \gamma^2}$). One arm has a transition matrix described by $\tgood$ and the other by $\tbad$:
    \begin{align*}
        \tgood = \left[
        \begin{array}{@{}cc|cc@{}}
            1 & 0 & 0 & 1 \\
            1 & 0 & 1 & 0
        \end{array}\right],\;
        \tbad = \left[
        \begin{array}{@{}cc|cc@{}}
            1 & 0 & 0.5 & 0.5 \\
            1 & 0 & 1 & 0
        \end{array}\right]
    \end{align*}

    Now, acting in state 1 for either $\tgood$ or $\tbad$ (lower row) doesn't make sense because there's no difference in the transition probabilities whether you act or not. Acting in state 0 of $\tgood$ uses an expected budget of $\jbar = \frac{1}{1 - \gamma^2}$ and increases the expected return $\Delta J_{\tgood}$ by $\frac{\gamma}{1 - \gamma^2}$. Acting in state 0 of $\tbad$ uses an expected budget of $\jbar = \frac{2}{2 - \gamma - \gamma^2}$ and increases the expected return $\Delta J_{\tgood}$ by $\frac{\gamma}{2 - \gamma - \gamma^2}$. So, if we solve for $\bm{\pi}^{{\that\text{-DEC}}}([\tgood,\tbad])$, the policy we get will be to only act in state 0 of $\tgood$, because (a) it has a higher ratio of $\frac{\Delta J}{\jbar}$ than acting in state 0 of $\tbad$, and (b) uses up all the budget.

    However, if we'd instead predicted the ``best-case'' transition matrix $T^{\text{OPT}}$ as defined in \cref{eqn:bestpred}, we could do better. As discussed in \cref{ex:violation}, acting in state 0 of $T^{\text{OPT}}$ only uses an expected budget of $\jbar = 1$. Therefore, solving for $\bm{\pi}^{{\that\text{-DEC}}}([T^{\text{OPT}},T^{\text{OPT}}])$ results in a policy for acting in state 0 for both $\tgood$ and $\tbad$ (as long as $2 < \frac{1}{1 - \gamma^2}$, which is satisfied for $\gamma = 0.9$). This is strictly better than $\bm{\pi}^{{\that\text{-DEC}}}([\tgood,\tbad])$ which only acts in state 0 of $\tgood$. Therefore:
    \begin{align*}
        J_{\bmtruet}(\bm{\pi}^{{\that\text{-DEC}}}([T^{\text{OPT}},T^{\text{OPT}}]) > J_{\bmtruet}(\bm{\pi}^{{\that\text{-DEC}}}([\tgood,\tbad]) & \qedhere
    \end{align*}
\end{proof}
Note that there isn't anything special about our choice of $\bmtruet$; we just chose values that simplify the exposition. We could, however, repeat this sort of argument for almost any choice of $\truet$ where acting is better than not acting! 

\section{Proof of Theorem~\ref{thm:decomposable}}
For the sake of clarity, we restate the Theorem~\ref{thm:decomposable} below.
\begin{thm*}
    Let $\Omega$ be the set of all distributions over deterministic policies, and let $\Omega^{\mathrm{DEC}}$ be the set of all distributions over deterministic, decomposable policies. Consider the following optimization problems
    \begin{align}
    \label{eqn:decomposable_proof1}
        \max_{Z \in \Omega} \quad  \E_{\bm{\pi} \sim Z} [J_{\bmtruet}(\bm{\pi})], \quad \text{s.t.}  \;\;  \E_{\bm{\pi} \sim Z }[\jbar_{\bmtruet}(\bm{\pi})] \leq \frac{B}{1-\gamma} \\
    \label{eqn:decomposable_proof2}
         \max_{\mathclap{Z \in \Omega^{\mathrm{DEC}}}} \quad \E_{\bm{\pi} \sim Z} [J_{\bmtruet}(\bm{\pi})],\quad \text{s.t.}  \;\; \E_{\bm{\pi} \sim Z }[\jbar_{\bmtruet}(\bm{\pi})] \leq \frac{B}{1-\gamma}
    \end{align}
    Then, any maximizer of optimization problem~\eqref{eqn:decomposable_proof2} is also a maximizer of optimization problem~\eqref{eqn:decomposable_proof1}.
\end{thm*}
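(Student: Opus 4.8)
The plan is to prove that the two problems share the \emph{same} optimal value and that any optimizer of the decomposable problem~\eqref{eqn:decomposable_proof2} is feasible for~\eqref{eqn:decomposable_proof1}; together these give the claim immediately. First I would note that every deterministic decomposable policy $(\pi_1,\ldots,\pi_N)$ induces the deterministic joint policy $\bm{s}\mapsto(\pi_1(s_1),\ldots,\pi_N(s_N))$, so $\Omega^{\mathrm{DEC}}\subseteq\Omega$. Hence any optimal $Z^\star$ for~\eqref{eqn:decomposable_proof2} is feasible for~\eqref{eqn:decomposable_proof1} with the same objective value, and $\mathrm{OPT}\eqref{eqn:decomposable_proof2}\le\mathrm{OPT}\eqref{eqn:decomposable_proof1}$. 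It therefore remains only to establish the reverse inequality, i.e.\ that a distribution over decomposable policies can match the value of the unrestricted problem; the two facts combined force a maximizer of~\eqref{eqn:decomposable_proof2} to attain $\mathrm{OPT}\eqref{eqn:decomposable_proof1}$ and hence maximize~\eqref{eqn:decomposable_proof1}.

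The engine of the argument is that both functionals split additively over arms. Since the arms' transitions are independent and the rewards are additive ($R(\bm{s})=\sum_i R(s_i)$ and $\rbar(\bm{s},\bm{a})=\sum_i a_i$), for any deterministic joint policy $\bm{\pi}$ we have $J_{\bmtruet}(\bm{\pi})=\sum_i J_i^{\bm{\pi}}$ and $\jbar_{\bmtruet}(\bm{\pi})=\sum_i \jbar_i^{\bm{\pi}}$, where $J_i^{\bm{\pi}},\jbar_i^{\bm{\pi}}$ denote the per-arm discounted returns; by linearity of expectation this extends to every $Z\in\Omega$. Next I would reduce each per-arm contribution to the polytope spanned by the $2^{|\mathcal{S}|}$ deterministic per-arm policies $\pi^j$. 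The key step is to show that the per-arm discounted state--action occupancy measure of arm $i$ induced by any joint $\bm{\pi}$ satisfies the single-arm Bellman flow constraints: although the action on arm $i$ may depend on the full joint state, \emph{conditional} on $(s_{i,t},a_{i,t})$ the next state $s_{i,t+1}$ is distributed according to $\truet_i$ alone. Consequently this occupancy lies in the single-arm occupancy polytope, whose extreme points are exactly the deterministic per-arm policies, so the per-arm value pair $(J_i^{\bm{\pi}},\jbar_i^{\bm{\pi}})$ lies in $\mathcal{C}_i:=\mathrm{conv}\{(J_{\truet_i}(\pi^j),\jbar_{\truet_i}(\pi^j)):j\}$.

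With this in hand the reverse inequality becomes a Minkowski-sum realizability argument. Given an optimal $Z^\star$ for~\eqref{eqn:decomposable_proof1}, its total value pair $(\sum_i J_i^{Z^\star},\sum_i \jbar_i^{Z^\star})$ lies in $\sum_i\mathcal{C}_i$, so for each arm I can write $(J_i^{Z^\star},\jbar_i^{Z^\star})=\sum_j w_{ij}(J_{\truet_i}(\pi^j),\jbar_{\truet_i}(\pi^j))$ for convex weights $w_{ij}$. I would then build the \emph{product} distribution $Z^{\mathrm{DEC}}\in\Omega^{\mathrm{DEC}}$ that independently assigns policy $\pi^j$ to arm $i$ with probability $w_{ij}$. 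By the additive decomposition and independence across arms, $Z^{\mathrm{DEC}}$ reproduces the identical objective $\sum_i J_i^{Z^\star}$ and the identical budget value $\sum_i \jbar_i^{Z^\star}\le\frac{B}{1-\gamma}$, so it is feasible and attains $\mathrm{OPT}\eqref{eqn:decomposable_proof1}$ within $\Omega^{\mathrm{DEC}}$. This yields $\mathrm{OPT}\eqref{eqn:decomposable_proof2}\ge\mathrm{OPT}\eqref{eqn:decomposable_proof1}$, and combined with the first paragraph the two optima coincide, completing the proof.

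The step I expect to be the main obstacle is verifying that the per-arm marginal occupancy is a \emph{bona fide} single-arm occupancy measure, as this is precisely where the restless-bandit independence structure is indispensable. A joint policy can entangle the arms through its action choices, and one must confirm that this entanglement only changes \emph{how much} of each deterministic per-arm behavior is used (the weights $w_{ij}$) and never manufactures dynamics outside the single-arm model. Checking the flow constraints carefully---summing $\gamma^t\Pr[s_{i,t}=s]$ over $t$ and peeling off the $t=0$ term---discharges this, but it is the one place where a hand-wavy argument would be unsound.
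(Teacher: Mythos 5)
Your proof is correct, but it takes a genuinely different route from the paper's. The paper argues through Lagrangian duality: it forms the Lagrangian of the joint problem, swaps $\min_{\lambda}$ and $\max_{Z}$ via a minimax theorem (the objective being bilinear in $\lambda$ and $Z$), observes that for fixed $\lambda$ the inner maximization decomposes across arms so that an optimal $Z$ may be taken in $\Omega^{\mathrm{DEC}}$, and then swaps back---essentially reusing the classical Whittle/Hawkins Lagrangian-decomposition insight that already underpins the rest of the paper (including the primal--dual solver in Section 4.3). You instead give a purely primal argument: the inclusion $\Omega^{\mathrm{DEC}}\subseteq\Omega$ gives one inequality between the optimal values, and the reverse inequality follows from showing that the per-arm marginal discounted occupancy measure of any joint policy satisfies the single-arm Bellman flow constraints, hence its per-arm value pair $(J_i,\bar{J}_i)$ lies in the convex hull of the values of the $2^{|\mathcal{S}|}$ deterministic per-arm policies, so a product distribution over per-arm policies exactly reproduces both the objective and the budget consumption. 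Your route avoids any appeal to minimax theorems and in fact establishes something stronger than the theorem states---the entire set of achievable (objective, budget) pairs coincides for the two problems, not just the optima---at the cost of invoking the standard but nontrivial occupancy-polytope machinery (extreme points of the single-arm flow polytope are deterministic stationary policies), which you correctly identify as the step requiring care; the conditional-independence of $s_{i,t+1}$ given $(s_{i,t},a_{i,t})$ is exactly what makes it go through. The paper's proof is shorter and thematically consistent with its algorithmic development; yours is more elementary and more informative about why the equivalence holds.
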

\begin{proof}
The Lagrangian of the first optimization problem is given by
\begin{align*}
    \max_{Z \in \Omega} \min_{\lambda \geq 0} \; \E_{\bm{\pi} \sim Z} [J_{\bmtruet}(\bm{\pi})]  + \lambda\left( \frac{B}{1-\gamma} - \E_{\bm{\pi} \sim Z }[\jbar_{\bmtruet}(\bm{\pi})]\right)
\end{align*}
Note that the above objective is linear in $\lambda$ and $Z$. Using popular minimax theorems we can swap the ordering of min and max and obtain the following equivalent problem~\citep{von1947theory, yanovskaya1974infinite}
\begin{align*}
     \min_{\lambda \geq 0} \max_{Z \in \Omega}\; \E_{\bm{\pi} \sim Z} [J_{\bmtruet}(\bm{\pi})]  + \lambda\left( \frac{B}{1-\gamma} - \E_{\bm{\pi} \sim Z }[\jbar_{\bmtruet}(\bm{\pi})]\right).
\end{align*}
Observe that for any fixed $\lambda$, the inner optimization decomposes across the $N$ arms. Using this observation, it is easy to see that there exists an optimal $Z$ that decomposes across the arms. So, the above problem can be equivalently written as
\begin{align*}
     \min_{\lambda \geq 0} \max_{Z \in \Omega^{\mathrm{DEC}}}\; \E_{\bm{\pi} \sim Z} [J_{\bmtruet}(\bm{\pi})]  + \lambda\left( \frac{B}{1-\gamma} - \E_{\bm{\pi} \sim Z }[\jbar_{\bmtruet}(\bm{\pi})]\right).
\end{align*}
By appealing to  minimax theorems we again swap the ordering of min and max and obtain the following equivalent problem
\begin{align*}
     \max_{Z \in \Omega^{\mathrm{DEC}}}\min_{\lambda \geq 0} \; \E_{\bm{\pi} \sim Z} [J_{\bmtruet}(\bm{\pi})]  + \lambda\left( \frac{B}{1-\gamma} - \E_{\bm{\pi} \sim Z }[\jbar_{\bmtruet}(\bm{\pi})]\right).
\end{align*}
Note that this is equivalent to the second problem in Equation~\eqref{eqn:decomposable_proof2}. This shows that any optimizer of Equation~\eqref{eqn:decomposable_proof2} is also an optimizer of Equation~\eqref{eqn:decomposable_proof1}.
\end{proof}

\section{Additional Results}
\begin{proposition}
\label{prop:softmax_monotonicity}
Let $f(\lambda) = \frac{\sum_{i=}^Nv_i e^{-\lambda v_i}}{\sum_{i=1}^N e^{-\lambda v_i}}$. Then $f$ is a monotonically decreasing function of $\lambda$.
\end{proposition}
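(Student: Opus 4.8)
The plan is to recognize $f(\lambda)$ as the mean of a probability distribution and to show that its derivative equals the negative of the corresponding variance, which is manifestly nonpositive. Concretely, I would introduce the softmax (Gibbs) weights $w_i(\lambda) = \frac{e^{-\lambda v_i}}{\sum_{k} e^{-\lambda v_k}}$, so that $f(\lambda) = \sum_i v_i\, w_i(\lambda)$ is exactly $\E[V]$ for the random variable $V$ taking value $v_i$ with probability $w_i(\lambda)$. Once this interpretation is in place, the whole argument reduces to a single derivative computation.

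First I would write $f = N/Z$ with partition function $Z(\lambda) = \sum_i e^{-\lambda v_i}$ and numerator $N(\lambda) = \sum_i v_i e^{-\lambda v_i}$, and differentiate each factor termwise to get $Z'(\lambda) = -\sum_i v_i e^{-\lambda v_i} = -N(\lambda)$ and $N'(\lambda) = -\sum_i v_i^2 e^{-\lambda v_i}$. Substituting these into the quotient rule yields $f'(\lambda) = \frac{N'Z - NZ'}{Z^2} = \frac{N^2 - Z\sum_i v_i^2 e^{-\lambda v_i}}{Z^2}$, where the term $-NZ' = N^2$ accounts for the favorable cancellation.

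Next I would re-express this in terms of the distribution $w(\lambda)$. Dividing through by $Z^2$ gives $f'(\lambda) = (\E[V])^2 - \E[V^2] = -\mathrm{Var}(V)$, with both moments taken under $w(\lambda)$. Since a variance is always nonnegative, $f'(\lambda) \le 0$ for every $\lambda$, and hence $f$ is monotonically decreasing, as claimed.

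There is essentially no hard step here; the computation is routine once the mean/variance interpretation is spotted, and in fact this is the familiar fact that the derivative of the log-partition function's first moment is minus the second cumulant. The only point worth flagging is the distinction between weak and strict monotonicity: equality $f'(\lambda) = 0$ holds precisely when $\mathrm{Var}(V) = 0$, i.e.\ when all the $v_i$ coincide, in which case $f$ is constant. Since the statement asserts only monotone (weak) decrease, the nonnegativity of the variance suffices and no genuine obstacle arises.
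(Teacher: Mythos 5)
Your proof is correct and follows essentially the same route as the paper: both differentiate $f$ via the quotient rule and identify $f'(\lambda)$ as $(\E[V])^2 - \E[V^2] = -\mathrm{Var}(V) \le 0$ under the softmax weights (the paper writes this variance explicitly as $-\sum_i (v_i - f(\lambda))^2 e^{-\lambda v_i} / \sum_i e^{-\lambda v_i}$). Your remark on when strict monotonicity holds also matches the paper's closing observation.
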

\begin{proof}
    The first derivative of $f$ is given by
    \[
    f'(\lambda) = -\frac{\sum_{i=1}^N v_i^2e^{-\lambda v_i}}{\sum_{i=1}^N e^{-\lambda v_i}} + \frac{(\sum_{i=1}^N v_ie^{-\lambda v_i})^2}{(\sum_{i=1}^N e^{-\lambda v_i})^2}.
    \]
    From the definition of $f(\lambda),$ the derivative can be rewritten as
    \[
    f'(\lambda) = -\frac{\sum_{i=1}^N (v_i-f(\lambda))^2e^{-\lambda v_i}}{\sum_{i=1}^N e^{-\lambda v_i}}
    \]
    This shows that $f'(\lambda) \leq 0$. Consequently, $f'$ is a decreasing function of $\lambda$. If at least one $v_i$ is different from others, then $f'(\lambda) < 0$ and $f$ is a strictly decreasing function of $\lambda$.
\end{proof}

\section{Visualizing the Learned Models}\label{sec:viz}
We visualize the predictions of the learned models in \cref{fig:viz}. We plot the predicted Whittle Index versus the true Whittle Index. In \cref{fig:mse_viz}, there isn't much difference between the Whittle Index distribution in the blue shaded region versus the population, highlighting that the model is not able to isolate beneficiaries for whom the action effect would be high. Conversely, in \cref{fig:dec_dfl_viz}, we see that the Whittle indices in the blue region have high true values, implying good model performance. We find that the model in \cref{fig:dfl_viz} has performance somewhere in between (a) and (c). This shows that our approach is able to effectively find subsets of the population.

\begin{figure*}[b]
\centering
\begin{subfigure}[b]{0.45\textwidth}
    \centering
    \includegraphics[width=\linewidth]{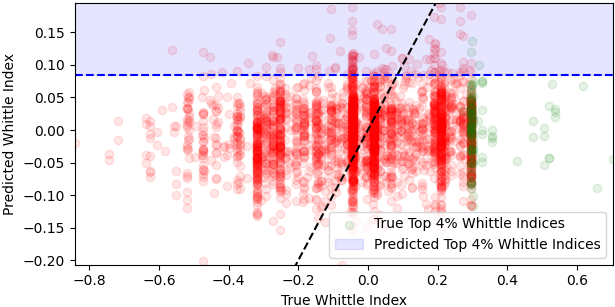}
    \vspace{-1.2em}
    \caption{MSE (2-Stage)}
    \label{fig:mse_viz}
\end{subfigure}
\hfil
\begin{subfigure}[b]{0.45\textwidth}
    \centering
    \includegraphics[width=\linewidth]{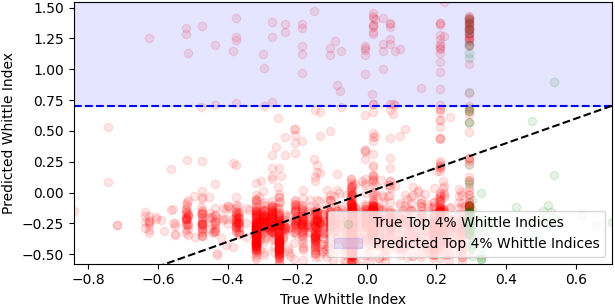}
    \vspace{-1.2em}
    \caption{DFL \cite{wang2023scalable}, 100 Trajectories}
    \label{fig:dfl_viz}
\end{subfigure}
\\\vspace{0.5em}
\begin{subfigure}[b]{0.5\textwidth}
    \centering
    \includegraphics[width=\linewidth]{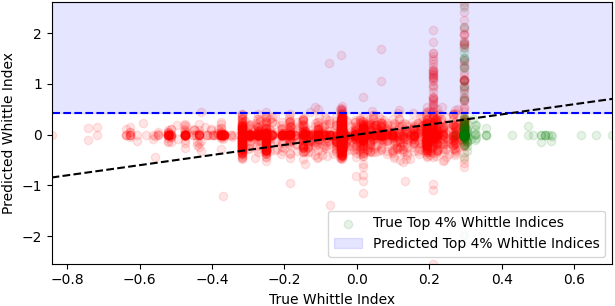}  
    \vspace{-1.2em}
    \caption{DEC-DFL (Ours)}
    \label{fig:dec_dfl_viz}
\end{subfigure}
\caption{Visualization of Predictions on Real-World Domain. \textmd{We plot the \textit{predicted} (y-axis) versus \textit{true} (x-axis) Whittle indices induced by different loss functions. A good loss function is one for which the top-$B$ predicted Whittle indices (blue shaded region) are actually high, i.e., the \textit{true} action effect is high when we predict a high action effect.}}
\label{fig:viz}
\end{figure*}

\end{document}